\def\eqref#1{equation~\ref{#1}}
\def\1{\bm{1}}
\DeclareMathAlphabet{\mathsfit}{\encodingdefault}{\sfdefault}{m}{sl}
\SetMathAlphabet{\mathsfit}{bold}{\encodingdefault}{\sfdefault}{bx}{n}
\DeclareMathOperator*{\argmax}{arg\,max}
\title{Conformalized Credal Regions for Classification with\\ Ambiguous Ground Truth}
\author{\name Michele Caprio \email michele.caprio@manchester.ac.uk \\
      \addr Department of Computer Science\\
      University of Manchester
      \AND
      \name David Stutz \email dstutz@deepmind.com \\
      \addr DeepMind
      \AND
      \name Shuo Li \email lishuo1@seas.upenn.edu\\
      \addr Department of Computer and Information Science \\
      University of Pennsylvania
      \AND
      \name Arnaud Doucet \email arnauddoucet@google.com \\
      \addr DeepMind \\ Department of Statistics, University of Oxford
      }
\newtheorem{theorem}{Theorem}
\newtheorem{lemma}[theorem]{Lemma}
\newtheorem{proposition}[theorem]{Proposition}
\newtheorem{corollary}{Corollary}[theorem]
\newtheorem{definition}[theorem]{Definition}
\begin{document}

\maketitle

\begin{abstract}
An open question in \emph{Imprecise Probabilistic Machine Learning} is how to empirically derive a credal region (i.e., a closed and convex family of probability measures on the output space) from the available data, without any prior knowledge or assumption. In classification problems, credal regions are a tool that is able to provide provable guarantees under realistic assumptions by
characterizing the uncertainty about the distribution of the labels.
Building on previous work, we show that credal regions can be directly constructed using conformal methods. This allows us to provide a novel extension of classical conformal prediction to problems with ambiguous ground truth, that is, when the exact labels for given inputs are not exactly known. The resulting construction enjoys desirable practical and theoretical properties: (i) conformal coverage guarantees, (ii) smaller prediction sets (compared to classical conformal prediction regions) and (iii) disentanglement of uncertainty sources (epistemic, aleatoric). We empirically verify our findings on both synthetic and real datasets.
\end{abstract}

\section{Introduction}\label{intro}

In most real-world applications of machine learning, especially in the context of safety-critical applications such as healthcare, researchers have found it difficult to reason with \emph{precise} probabilities. Instead, practitioners have become comfortable reasoning in terms of families of probabilities, often in the form of closed and convex sets called \emph{credal regions}. Imprecise probabilistic machine learning (IPML) \citep{denoeux,zaffalon,dest1,dipk,ergo.th,ext_prob,inn,ibcl,impr-MSG,caprio_IBNN,novel_bayes,credal-learning,second-order} aims to develop machine learning theory and methods that work with such \emph{imprecise} probabilities. Such tools allow to better quantify and disentangle different types of uncertainty, e.g., epistemic (model) and aleatoric (data) uncertainties, which play key roles in any machine learning system. However, other sources of uncertainty are also highly relevant; for example, uncertainty originating from the annotation process used to derive ground truth labels \citep{StutzARXIV2023}.

A crucial line of research in IPML is that of empirically deriving credal regions without any prior knowledge or assumption. 
The first steps in this direction were made recently by \cite{CELLA20221,cella}. They discovered that, subject to a so-called {\em consonance} assumption (see Section \ref{martincella}) and given exchangeable calibration data, the conformal transducer assigning a $p$-value to each possible label uniquely identifies a credal region.
This is a very promising result, since this only relies on having to select a non-conformity measure, and the obtained coverage guarantee is valid irrespective of this choice.
Unfortunately, \cite{CELLA20221,cella} do not provide an implementation of their results on credal regions in real-world, complex datasets.
Moreover, they ignore the particularly interesting case of \emph{ambiguous ground truth} \citep{StutzARXIV2023,StutzTMLR2023} where labels are not crisp, but subject to uncertainty due to rater disagreement \citep{YanML2014,ZhengPVLDB2017} and imperfect labeling tools.
In fact, calibration data are often not exactly classified; we can see this in machine learning \citep{DawidJRSS1979,SmythNIPS1994}, but especially in medicine \citep{Feinstein1990,McHughBM2012,RaghuICML2019,Schaekermann2020}, natural language processing \citep{PavlickTACL2019,nie-etal-2020-learn,baan-etal-2022-stop, plank-2022-problem}, and computer vision \citep{Peterson_2019_ICCV}.


\textbf{Contributions.} In this paper, we follow an alternative way of constructing credal regions in a conformal way, inspired by recent work \citep{alireza2,alireza,eyke2,bordini}.
Specifically, we directly construct conformalized families of probabilities and show that these are in fact credal regions, i.e. convex and closed.
Furthermore,
we extend this approach to classification problems with ambiguous ground truth, first studied in the context of conformal prediction by \cite{StutzTMLR2023,stutz}.\footnote{Reference \cite{StutzTMLR2023} refers to the published version, while \cite{stutz} to the first version of the paper. Since many of the concept needed in the present paper were not included in the final, published version, we differentiate between the two for ease of reference.} These problems are extremely relevant from an applied point of view.
Instead of a ``precise'' calibration set 
$(\mathbf{x^n},\mathbf{y^n}) = \{(x_1,y_1),\ldots,(x_n,y_n)\} \subset \mathcal{X} \times \mathcal{Y}$,\footnote{We use lower case letters for realizations, while capital letters for random variables. In this case, calibration data $(x_1,y_1),\ldots,(x_n,y_n)$ are realizations that we observe, while the test example $X_{n+1}$ is a random variable that has not yet been realized.}
we consider a calibration set $D_\text{c}=\{(x_i,\lambda_i)\}$ that encompasses the ground truth ambiguity, where the 
$\lambda_i$'s are probability vectors, and the $k$-th entry $\lambda_{i,k}$ is the probability of $k$ being the correct label for $x_i$.\footnote{A pair $(x_i,\lambda_i)$ in the calibration set is the realization of a random pair $(X_i,\Lambda_i)$.}
To see that this is a generalization of the usual classification problems, notice that we can write a pair $(x_i,y_i)$ as a pair $(x_i,\lambda_i)$, where $\lambda_i$ is a one-hot encoding vector with entry $1$ on the correct label $k=y_i$. By directly conformalizing in the probability space, we obtain an appealing calibration property for the constructed credal region: the true data generating process belongs to the credal region with high probability $1-\alpha$, where $\alpha$ is chosen by the user. Due to this property, our credal regions can be used to disentangle and quantify aleatoric and epistemic uncertainties associated with the analysis at hand \citep{eyke,volume_caprio}. 
On both synthetic and real datasets, we verify both the coverage of the constructed regions and the true label coverage guarantee of the derived predictive sets. In contrast to \cite{CELLA20221,cella}, we do not require any assumption beyond exchangeability. Using imprecise highest density sets, we can derive predictive sets of labels that are more efficient, i.e., smaller on average, compared to those by \cite{stutz}.


\textbf{Outline.} This paper is structured as follows:
Section \ref{background} provides the required background. First, in Section \ref{cpr}, we recall the plausibility regions and the conformal predictive sets derived in \cite{stutz} and define the corresponding conformal coverage guarantee which we built on in the following. Then, in Section \ref{imprecise} we revisit the necessary  imprecise probabilistic concepts relevant for this paper, including imprecise highest density sets (IHDSs). Part of our main results, Section \ref{conf-as-cred} shows that the plausibility region in \cite{stutz} is equivalent to a credal region, which also satisfies a desirable calibration property. Then, Sections \ref{utility} and \ref{martincella} discuss the utility of our credal region the field of Imprecise Probabilistic Machine Learning in general and the relation and improvements over the work of \cite{CELLA20221,cella} in particular. Finally, Section \ref{main} presents our findings on obtaining more efficient prediction sets from our credal regions compared to \citep{stutz}. We present our experimental evidence in Section \ref{experiments} and conclude in Section \ref{concl}.

\section{Background}\label{background}

\subsection{Conformal Plausibility Regions}\label{cpr}

In \cite{StutzTMLR2023,stutz}, the authors study conformal prediction for $K$-class classification in the context of \emph{ambiguous} ground truth. Standard split conformal prediction typically assumes instead a calibration set of examples with ``crisp'' ground truth labels $\{(X_i, Y_i)\}_{i=1}^n$ 
\citep{conformal_tutorial}.
Realizing that such crisp ground truth labels might not be available in many practical settings, \cite{StutzTMLR2023,stutz} assume a calibration set of examples and so-called \emph{plausibilities}, $\{(X_i,\Lambda_i)\}_{i=1}^n$, where plausibilities $\lambda_i$ (i.e., the realizations of $\Lambda_i$) represent categorical distributions over the $K$ possible labels.\footnote{They may be, for example, realizations from random variables $\Lambda_i$ distributed according to a Dirichlet distribution.} This allows us to represent ambiguous examples where 
the corresponding distribution $\mathbb{P}[Y|X]$ is not one-hot and might have high entropy. In the full-information setting, these plausibility vectors may represent the true distribution $\mathbb{P}[Y|X]$ directly; in many practical scenarios, however, we can only approximate the true distribution. For example, disagreement among annotators frequently indicates ambiguity and deterministic or probabilistic aggregation of multiple annotators can be used as plausibilities \citep{StutzARXIV2023}. Then, we assume the true distribution can be obtained in the limit of infinite ``faithful'' annotators.

For a new test example $X_{n+1}$ and a significance level $\alpha\in [0,1]$, a so-called \textit{plausibility region} $c(X_{n+1})$ is derived as
\begin{align}\label{plaus-reg}
    \bigg(\{(x_i,\lambda_i)\}_{i=1}^n \text{, } X_{n+1} \text{, } \alpha \bigg) \rightsquigarrow c(X_{n+1}) \coloneqq \{\lambda\in\Delta^{K-1} : e(X_{n+1},\Lambda_{n+1}=\lambda) \geq \tau\},
\end{align}
where $\Delta^{K-1}$ is the unit simplex in $\mathbb{R}^K$,
and $e(X_{n+1},\Lambda_{n+1}=\lambda) \coloneqq \sum_{k=1}^K \lambda_k E(X_{n+1},k)$ is a score derived from a so-called {\em conformity score} $E(X_{n+1}, k)$ based on the probabilistic model prediction $p_k(X_{n+1})$. Such a model (e.g. the softmax output for class $k$ of a trained neural network) approximates the posterior class probabilities, $p_k(X_{n+1}) \approx \mathbb{P}(Y_{n+1}=k \mid X_{n+1}, D)$, where $D$ denotes the training set, and it is assumed to be available. Conformity score $E$ is built so that a high score is more unlikely, hence we can interpret $E$ as a function assigning a score to the assertion ``$k$ is the correct label for the realization $x_{n+1}$ of test example $X_{n+1}$'', which is lower the more the pair $(x_{n+1},k)$ ``lacks conformity'' with the training data. It is worth noting that alternative definitions of $e$ have also recently been explored by \cite{alireza}.
The threshold $\tau$ is then chosen using a simple quantile computation on the calibration set,
\begin{align*}
    \tau=Q\left( \left\lbrace{ e\left( X_i,\Lambda_i \right)}\right\rbrace_{i=1}^n;\lfloor \alpha(n+1) \rfloor/n \right).
\end{align*}
Under the assumption that $\{(X_i,\Lambda_i)\}_{i=1}^{n+1}$ are exchangeable, following standard conformal prediction literature \cite[Equation (13)]{stutz}, the plausibility region $c(X_{n+1})$ provides a coverage guarantee stating that $\mathbb{P}[\Lambda_{n+1} \in c(X_{n+1})] \geq 1 - \alpha$,\footnote{This probability is also on the calibration set, since the latter is used to derive $c(X_{n+1})$.} where $\Lambda_{n+1}$ is the unobserved true plausibility vector of the test example.
If the $\lambda_i$'s in the calibration set 
correspond to the true distributions $\mathbb{P}[Y|X_i=x_i]$, this provides coverage with respect to the true distribution. As \cite[Section 3.1]{stutz} details, however, in many practical settings, the plausibilities are obtained from expert annotations. 
With finite annotators, the authors are able to give a 
coverage guarantee, $\mathbb{P}_{\text{agg}}[\Lambda_{n+1} \in c(X_{n+1})] \geq 1 - \alpha$, stated in terms of the distribution $\mathbb{P}_{\text{agg}}$ that explicitly captures how annotations are aggregated into plausibilities. The underlying assumption is that $\lambda_{n + 1,k} = \mathbb{P}_{\text{agg}}[Y_{n + 1} = k |X_{n + 1}]$ where, ideally, $\mathbb{P}_{\text{agg}} \approx \mathbb{P}$, and the annotators are the same for the calibration set and the test example. In any case, annotators can ``agree to disagree'' such that $\lambda_{n + 1}$ has high entropy in which case $x_{n + 1}$ is called ambiguous. For simplicity, we ignore this caveat for the presentation of this paper and write $\mathbb{P}$ henceforth. The interested reader can find some extra details on the difference between $\mathbb{P}_{\text{agg}}$ and $\mathbb{P}$ in Appendix \ref{p-agg-diff}.

The plausibility region $c(X_{n+1})$ is then used in \cite[Equation (41)]{stutz} to derive ``plausibility-reduced'' predictive sets (PRPS) for some user-chosen $\delta \in [0,1]$,
\begin{align}\label{psi-3}
    \Psi(c(X_{n+1}))\coloneqq \left\lbrace{y\in\mathcal{Y} : \exists \lambda \in c(X_{n+1}) \text{, } l \in \mathcal{Y} \text{ s.t. } \sum_{i=1}^l \lambda_{\sigma_i} \geq 1-\delta \text{ and } \exists i \leq l \text{ s.t. } \sigma_i=y}\right\rbrace.
\end{align}
Here $\lambda^\sigma=(\lambda_{\sigma_1},\ldots,\lambda_{\sigma_K})^\top$ corresponds to vector $\lambda$ sorted in descending order. Equation \ref{psi-3} tells us that $\Psi(c(X_{n+1}))$ is the set of all labels $y$ that are assigned a probability non-smaller than $1-\delta$ (of being the correct one for the realization of $X_{n+1}$), by at least one probability vector $\lambda$ that belongs to the plausibility region $c(X_{n+1})$. 

In \cite{stutz}, this is also contrasted with regular conformal predictive sets (CPS) that are obtained by calibrating a threshold $\kappa$ on the per-label scores $E$ and constructing the CPS $C(X_{n + 1})$ as $\{k \in \mathcal{Y}: E(X_{n+1},k) \geq \kappa\}$. There, the latter is further adapted to allow for ambiguous examples with plausibilities $\lambda_i$ available for calibration, see \cite[Algorithm 1]{stutz}. These adapted CPS $C(X_{n + 1})$ are shown to be generally more efficient, i.e., smaller, compared to $\Psi(c(X_{n + 1}))$, but are unable to capture or even disentangle different sources of uncertainty. We make a step towards addressing this gap by deriving narrower predictive sets that allow for uncertainty quantification and disaggregation, using methods from imprecise probability.

\subsection{Imprecise Probability}\label{imprecise}

Following \cite{CELLA20221,cella} and additional previous work on IPML \citep{intro_ip,constriction,coolen}, we briefly introduce the notions of lower and upper probabilities, and also the concept of Imprecise Highest Density Sets (IHDS). These will play a pivotal role in the paper -- and especially in Section \ref{main} -- as a way to derive predictive sets of labels from our credal regions, that are shown to be more efficient than the predictive sets $\Psi(c(X_{n+1}))$.

We begin by recalling that a \textit{credal region} $\mathcal{P}$ is a convex and closed family of probabilities. Its lower envelope $\underline{P} = \inf_{P\in\mathcal{P}}P$ is called \textit{lower probability}, while its upper envelope $\overline{P} = \sup_{P\in\mathcal{P}}P$ is called \textit{upper probability}.
Considering the assertion that the true label $Y_{n + 1}$ of test example $X_{n + 1}$ is included in a set $A \subseteq \mathcal{Y}$, and using $P(A) \equiv P(Y_{n + 1}  \in A)$, we see that the upper probability is conjugate to the lower probability, i.e. for all $A \subseteq \mathcal{Y}$, $\overline{P}(A)=1-\underline{P}(A^c)$, where $A^c=\mathcal{Y}\setminus A$. Hence, studying one is sufficient to then retrieve the other. Moreover, in the present paper $\mathcal{Y}$ is finite, meaning
$$\underline{P}(A)=\inf_{P\in\mathcal{P}} P(A)= \inf_{P\in\mathcal{P}} \left[ \sum_{k\in A} P(\{k\}) \right].$$
As we can see, $\underline{P}(A)$ can be calculated in polynomial time. This is important, since using the lower probability we can derive predictive sets of labels.

\begin{definition}[Imprecise Highest Density Set, \cite{coolen}]\label{ihdr}
    Let $\delta\in [0,1]$ be any significance level. Then, the $(1-\delta)$-Imprecise Highest Density Set (IHDS) $\text{IS}_{\mathcal{P},\delta}$ associated with $\mathcal{P}$ is the subset of $\mathcal{Y}$ that satisfies the following two conditions,
    \begin{itemize}
        \item[(i)] $\underline{P}(\text{IS}_{\mathcal{P},\delta}) \geq 1-\delta$,
        \item[(ii)] $|\text{IS}_{\mathcal{P},\delta}|$ is a minimum of all sets for which (i) holds.
    \end{itemize}
\end{definition}

In order to compute IHDSs in practice, we need another important result -- proved in \cite[Section 4.4]{intro_ip} and \cite{decampos}.

\begin{proposition}[Computing $\underline{P}(A)$]\label{prop5}
    If $\underline{P}$ avoids sure loss, i.e. if $\sum_{k\in\mathcal{Y}}\underline{P}(\{k\}) \leq 1 \leq \sum_{k\in\mathcal{Y}}\overline{P}(\{k\})$, then 
    \begin{align}
        \underline{P}(A)&=\max\left\lbrace{\sum_{k\in A}\underline{P}(\{k\}), 1-\sum_{k\in A^c}\overline{P}(\{k\})}\right\rbrace \label{exact-lp}\\
        &\geq \sum_{k\in A}\underline{P}(\{k\}), \quad \forall A \subseteq \mathcal{Y}. \nonumber
    \end{align}
\end{proposition}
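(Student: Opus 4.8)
The plan is to prove the two displayed relations separately: first the lower-bound chain (the inequalities leading from the middle expression down to $\sum_{k\in A}\underline{P}(\{k\})$), which holds for any credal region, and then the harder equality in \eqref{exact-lp}. The latter additionally exploits that the region in play is the one pinned down by its singleton bounds $[\underline{P}(\{k\}),\overline{P}(\{k\})]$ over $k\in\mathcal{Y}$ (the probability-interval model of \cite{decampos}); this structural feature is essential, since for a general credal region the infimum defining $\underline{P}(A)$ can strictly exceed the stated maximum, and it is precisely this feature together with avoiding sure loss that makes the closed form exact.

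For the ``$\geq$'' direction: since each $P\in\mathcal{P}$ is additive and satisfies $P(\{k\})\geq\underline{P}(\{k\})$ by definition of the lower envelope, we have $P(A)=\sum_{k\in A}P(\{k\})\geq\sum_{k\in A}\underline{P}(\{k\})$, and taking the infimum over $P\in\mathcal{P}$ yields $\underline{P}(A)\geq\sum_{k\in A}\underline{P}(\{k\})$. Dually, $\overline{P}(A^c)=\sup_{P\in\mathcal{P}}\sum_{k\in A^c}P(\{k\})\leq\sum_{k\in A^c}\overline{P}(\{k\})$, because the supremum of a sum is bounded by the sum of the suprema, and the conjugacy relation $\underline{P}(A)=1-\overline{P}(A^c)$ recalled above then gives $\underline{P}(A)\geq 1-\sum_{k\in A^c}\overline{P}(\{k\})$. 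Combining the two bounds shows $\underline{P}(A)$ is at least the maximum, which already establishes the final inequality of the statement, since a maximum dominates its first argument.

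The substance is the reverse inequality $\underline{P}(A)\leq\max\{\cdot,\cdot\}$, for which I would exhibit a feasible $P^\star\in\mathcal{P}$ attaining the claimed value, i.e. solve the linear program of minimizing $P(A)=\sum_{k\in A}P(\{k\})$ subject to $\underline{P}(\{k\})\leq P(\{k\})\leq\overline{P}(\{k\})$ for all $k$ and $\sum_{k\in\mathcal{Y}}P(\{k\})=1$. The natural ``water-filling'' candidate pushes the masses on $A$ down to their lower bounds and the masses on $A^c$ up to their upper bounds, and I would split into two cases according to which term of the maximum is active. If $\sum_{k\in A}\underline{P}(\{k\})+\sum_{k\in A^c}\overline{P}(\{k\})\geq 1$, set $P^\star(\{k\})=\underline{P}(\{k\})$ on $A$ and spread the residual mass $1-\sum_{k\in A}\underline{P}(\{k\})$ over $A^c$; otherwise set $P^\star(\{k\})=\overline{P}(\{k\})$ on $A^c$ and spread the residual mass over $A$. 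In each case $P^\star(A)$ equals the active term of the maximum by construction.

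The main obstacle is verifying that this candidate is genuinely feasible, i.e. that the residual mass can be allocated within the remaining singleton intervals; this is exactly where avoiding sure loss is used. Concretely, in the first case one checks $\sum_{k\in A^c}\underline{P}(\{k\})\leq 1-\sum_{k\in A}\underline{P}(\{k\})\leq\sum_{k\in A^c}\overline{P}(\{k\})$: the right inequality is the case hypothesis, while the left is equivalent to $\sum_{k\in\mathcal{Y}}\underline{P}(\{k\})\leq 1$, one half of avoiding sure loss; the second case is symmetric and uses $\sum_{k\in\mathcal{Y}}\overline{P}(\{k\})\geq 1$. Once the relevant residual lies between the sum of the lower bounds and the sum of the upper bounds over the remaining atoms, a routine componentwise interpolation produces admissible values $P^\star(\{k\})\in[\underline{P}(\{k\}),\overline{P}(\{k\})]$ summing to that residual, so $P^\star\in\mathcal{P}$ and $\underline{P}(A)\leq P^\star(A)=\max\{\cdot,\cdot\}$, which closes the equality.
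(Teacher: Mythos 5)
The paper does not actually prove Proposition \ref{prop5}; it imports it from \cite{intro_ip} and \cite{decampos}, so there is no in-text argument to compare against. Your proof is correct and is essentially the standard one from those references: the lower-envelope bounds together with conjugacy give $\underline{P}(A)\geq\max\{\cdot,\cdot\}$ for \emph{any} credal region, and the water-filling construction of a feasible $P^\star$ gives the reverse inequality, with the two halves of the avoiding-sure-loss condition used exactly where you place them, namely to certify that the residual mass fits between the sums of the lower and upper singleton bounds on the remaining atoms (after which a continuous interpolation between the all-lower and all-upper assignments lands on the residual). Your opening caveat is also substantively correct and worth making explicit: the equality in \eqref{exact-lp} is a property of the probability-interval model generated by the singleton bounds, not of an arbitrary credal region. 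For instance, on $\mathcal{Y}=\{1,2,3,4\}$ the convex hull of $(1/2,0,1/2,0)^\top$ and $(0,1/2,0,1/2)^\top$ avoids sure loss, has $\underline{P}(\{1,2\})=1/2$, yet the right-hand side of \eqref{exact-lp} evaluates to $0$. Since the credal region the paper actually works with is a half-space slice of the simplex rather than an interval model, only the ``$\geq$'' part of the proposition is guaranteed there; this is harmless for the IHDS construction (using the max as a proxy for $\underline{P}(A)$ is conservative, so the coverage guarantee survives, at worst with slightly larger sets), but your reading is the one under which the stated equality is a theorem.
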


Since $\underline{P}$ is the lower envelope of a credal region, the sure loss avoidance condition is always met, as proven in \cite{walley}. We make this explicit in the following Lemma.

\begin{lemma}[Lower Envelopes Avoid Sure Loss]\label{lemma1}
    The lower probability $\underline{P}$ associated with $\mathcal{P}$ avoids sure loss.
\end{lemma}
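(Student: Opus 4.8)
The plan is to exploit the fact that a credal region is by definition a nonempty family of probability measures, so that I can compare the lower and upper envelopes against a single fixed member. Convexity and closedness play no role here; only nonemptiness of $\mathcal{P}$ is needed. I would therefore begin by fixing an arbitrary $P_0 \in \mathcal{P}$.

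For the left-hand inequality I would argue pointwise over the (finite) label space $\mathcal{Y}$. For each $k \in \mathcal{Y}$, the definition of the lower envelope gives $\underline{P}(\{k\}) = \inf_{P \in \mathcal{P}} P(\{k\}) \leq P_0(\{k\})$, since $P_0$ is one admissible competitor in the infimum. Summing this inequality over the finitely many $k \in \mathcal{Y}$ yields $\sum_{k \in \mathcal{Y}} \underline{P}(\{k\}) \leq \sum_{k \in \mathcal{Y}} P_0(\{k\}) = 1$, where the final equality uses that $P_0$ is a probability measure on $\mathcal{Y}$. Symmetrically, for the right-hand inequality I would use the upper envelope: for each $k$ we have $\overline{P}(\{k\}) = \sup_{P\in\mathcal{P}} P(\{k\}) \geq P_0(\{k\})$, and summing over $k \in \mathcal{Y}$ gives $\sum_{k\in\mathcal{Y}} \overline{P}(\{k\}) \geq \sum_{k\in\mathcal{Y}} P_0(\{k\}) = 1$. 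Combining the two bounds establishes $\sum_{k}\underline{P}(\{k\}) \leq 1 \leq \sum_k \overline{P}(\{k\})$, which is exactly the sure-loss-avoidance condition invoked in Proposition \ref{prop5}.

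I do not anticipate a genuine obstacle here. The only subtlety worth flagging is methodological: the argument bounds each infimum (resp.\ supremum) by its value at the \emph{single} fixed $P_0$ and only then sums, rather than attempting to interchange the summation with the infimum, which would require additional justification. Since $\mathcal{Y}$ is finite the summation step is unproblematic, and the nonemptiness of $\mathcal{P}$—guaranteed by its being a credal region—supplies the needed witness $P_0$.
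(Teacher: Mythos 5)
Your proof is correct and complete: the sure-loss-avoidance condition as used in Proposition \ref{prop5} is exactly $\sum_{k}\underline{P}(\{k\}) \leq 1 \leq \sum_k \overline{P}(\{k\})$, and bounding each envelope by a single fixed $P_0 \in \mathcal{P}$ before summing over the finite label space establishes it with no gaps. The paper itself gives no inline argument for this lemma---it simply defers to \cite{walley}---and your domination-by-a-member argument is precisely the standard proof that citation encapsulates, with the added virtue that you correctly isolate nonemptiness of $\mathcal{P}$ as the only property actually used.
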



Definition \ref{ihdr} can be used to derive a parallel between (generic) Conformal Prediction Sets (CPSs) and IHDSs. Condition (i) tells us that $Y_{n+1}$ belongs to $\text{IS}_{\mathcal{P},\delta}$ with $P$-probability of at least $1-\delta$, for all distributions $P$ in the credal region $\mathcal{P}$. Condition (ii) tells us that $\text{IS}_{\mathcal{P},\delta}$ is ``efficient''. That is, it is the narrowest subset of the label set $\mathcal{Y}$ that is able to ensure the probabilistic guarantee of condition (i).
Compared to CPSs, this is a weaker guarantee: First and foremost, the conformal guarantee is uniform, holding for all possible exchangeable distributions $P$ on $\mathcal{Y}$. Second, without assumptions on the credal region $\mathcal{P}$, condition (i) ignores whether the true distribution belongs to the credal region $\mathcal{P}$.

\section{Conformal Plausibility Regions as Credal Regions}\label{conf-as-cred}

A key contribution of this paper is relating the plausibility regions $c(X_{n+1})$ in \eqref{plaus-reg} to the imprecise probabilistic notion of credal regions.
As we will show, this leads to a remarkable synergy that allows us to construct a credal regions for $X_{n + 1}$ in a conformal way, and subsequently to use IHDSs to construct predictive sets of labels.
Compared to \cite{CELLA20221,cella}, our credal regions provide coverage, only requiring the exchangeability of $\{(X_i,\Lambda_i)\}_{i=1}^{n+1}$. In addition, we improve over \cite{stutz}: the predictive sets that we derive are more efficient, i.e., always non-broader and sometimes strictly narrower.
For a start, we show that the plausibility regions $c(X_{n+1})$ are convex and closed, and thus proper credal regions.

\begin{proposition}[Properties of the Plausibility Region]\label{prop1}
    The plausibility region $c(X_{n+1})$ derived in \eqref{plaus-reg} is convex and closed.\footnote{The topology in which $c(X_{n+1})$ is closed is specified in the proof of the statement.}
\end{proposition}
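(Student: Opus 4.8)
The plan is to exhibit $c(X_{n+1})$ as the intersection of two sets that are each convex and closed, and then invoke the fact that both properties are preserved under intersection. First I would condition on the observed calibration sample and on the event $X_{n+1}=x_{n+1}$, so that the per-label conformity scores $E(X_{n+1},k)$, $k=1,\dots,K$, and the threshold $\tau=Q(\{e(X_i,\Lambda_i)\}_{i=1}^n;\lfloor \alpha(n+1)\rfloor/n)$ become \emph{fixed} real numbers; the only free variable is then $\lambda$. I work in $\mathbb{R}^K$ equipped with its standard Euclidean topology, which is the topology referenced in the footnote (equivalently one may use the subspace topology on the affine hull of $\Delta^{K-1}$, since after intersecting with $\Delta^{K-1}$ the two descriptions agree). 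Writing $\vec{E}=(E(X_{n+1},1),\dots,E(X_{n+1},K))^\top$, the defining quantity becomes $e(X_{n+1},\Lambda_{n+1}=\lambda)=\langle \vec{E},\lambda\rangle$.

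The key observation is that $\lambda \mapsto \langle \vec{E},\lambda\rangle$ is a linear functional, hence continuous. Consequently the set $H\coloneqq\{\lambda\in\mathbb{R}^K : \langle \vec{E},\lambda\rangle \geq \tau\}$ is a closed half-space: it is the preimage of the closed ray $[\tau,+\infty)$ under a continuous map, and it is convex because, for $t\in[0,1]$ and any $\lambda,\lambda'$ with $\langle \vec{E},\lambda\rangle,\langle \vec{E},\lambda'\rangle \geq \tau$, linearity gives $\langle \vec{E}, t\lambda+(1-t)\lambda'\rangle = t\langle \vec{E},\lambda\rangle+(1-t)\langle \vec{E},\lambda'\rangle \geq \tau$. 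Separately, the simplex $\Delta^{K-1}=\{\lambda\in\mathbb{R}^K : \lambda_k\geq 0 \text{ for all } k,\ \sum_{k}\lambda_k=1\}$ is convex and closed (indeed compact), being itself a finite intersection of closed half-spaces with a hyperplane.

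Finally I would note that $c(X_{n+1})=\Delta^{K-1}\cap H$ and conclude via the elementary facts that an intersection of closed sets is closed and an intersection of convex sets is convex. There is essentially no hard step here; the only point requiring care is the conditioning. The argument hinges on emphasizing that all randomness is frozen by conditioning on the calibration data and on $X_{n+1}=x_{n+1}$, so that $\vec{E}$ and $\tau$ are constants and $e(X_{n+1},\Lambda_{n+1}=\lambda)$ is genuinely linear (hence affine) in $\lambda$ rather than some nonlinear object. Once this is made explicit, convexity and closedness of $c(X_{n+1})$ follow immediately.
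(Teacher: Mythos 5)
Your proof is correct and rests on the same essential fact as the paper's: $e(X_{n+1},\cdot)$ is linear (hence continuous) in $\lambda$ once the calibration data and the test input are conditioned on. The packaging differs. The paper verifies convexity by directly expanding $e(X_{n+1},\beta\lambda^{(1)}+(1-\beta)\lambda^{(2)})$ and verifies closedness sequentially, first establishing continuity of $e(X_{n+1},\cdot)$ via an explicit $\epsilon$--$\delta$ argument (with $\delta_\epsilon=\epsilon/\sum_k E(X_{n+1},k)$) and then passing to the limit in the weak inequality $e\geq\tau$. You instead write $c(X_{n+1})=\Delta^{K-1}\cap H$ with $H$ a closed half-space and invoke stability of convexity and closedness under intersection. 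Your route is slightly tidier on two counts: it makes explicit that convex combinations and limits of points of $c(X_{n+1})$ remain in $\Delta^{K-1}$ (the paper only checks the score condition and leaves simplex membership implicit), and it sidesteps the division by $\sum_k E(X_{n+1},k)$ in the paper's choice of $\delta_\epsilon$, which would be problematic in the degenerate case where all conformity scores vanish. Your remark about freezing the randomness by conditioning, so that $\vec{E}$ and $\tau$ are genuine constants, is also a point the paper leaves tacit. The paper's more hands-on argument, in exchange, keeps everything self-contained at the level of first principles without appealing to named topological facts.
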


\begin{proof}
    We first show that $c(X_{n+1})$ is convex. Pick $\lambda^{(1)},\lambda^{(2)}\in c(X_{n+1})$, $\lambda^{(1)}\neq \lambda^{(2)}$, and $\beta\in [0,1]$. Then,
    \begin{align*}
    e(X_{n+1},\beta\lambda^{(1)}+(1-\beta)\lambda^{(2)})&=\sum_{k=1}^K (\beta\lambda^{(1)}_k+(1-\beta)\lambda^{(2)}_k) E(X_{n+1},k)\\
    &=\beta \sum_{k=1}^K\lambda^{(1)}_k E(X_{n+1},k) + (1-\beta) \sum_{k=1}^K\lambda^{(2)}_k E(X_{n+1},k) \\
    &\geq \beta \tau + (1-\beta) \tau=\tau.
\end{align*}
Hence, $\beta\lambda^{(1)}+(1-\beta)\lambda^{(2)}\in c(X_{n+1})$, which proves convexity.  

Let us then turn our attention to closure. We begin by showing that $e(X_{n+1},\cdot)$ is a continuous function.  Pick a converging sequence $(\lambda^{(m)})\subseteq c(X_{n+1})$. This means that there is a probability vector $\lambda^\star$ such that $\lambda^{(m)} \rightarrow \lambda^\star$. That is, for all $\gamma>0$, there exists $M\in\mathbb{N}$ such that $|\lambda^\star_k-\lambda^{(m)}_k|<\gamma$, for all $m\geq M$ and all $ k\in\{1,\ldots,K\}$. Now, to show continuity, we prove that, for all $\epsilon>0$, there exists $\delta_\epsilon \coloneqq \epsilon/\sum_{k=1}^K E(X_{n+1},k) >0$ such that $|\lambda^\star_k-\lambda^{(m)}_k|<\delta_\epsilon$  implies that $|e(X_{n+1},\lambda^\star)-e(X_{n+1},\lambda^{(m)})|<\epsilon$. Indeed,
    \begin{align*}
        \left|e(X_{n+1},\lambda^\star)-e(X_{n+1},\lambda^{(m)})\right| &= \left|\sum_{k=1}^K(\lambda^\star_k-\lambda^{(m)}_k) E(X_{n+1},k)\right|\\
        &\leq \sum_{k=1}^K\left|\lambda^\star_k-\lambda^{(m)}_k\right| E(X_{n+1},k)\\
        &<\underbrace{\delta_\epsilon}_{\eqqcolon \frac{\epsilon}{\sum_{k=1}^K E(X_{n+1},k)}} \sum_{k=1}^K E(X_{n+1},k) = \epsilon.
    \end{align*}
    This proves that $e(X_{n+1},\cdot)$ is continuous. Then, after noting that $e(X_{n+1},\lambda) \geq \tau$, for all $\lambda\in c(X_{n+1})$ -- here the weak inequality plays a crucial role in showing closure -- we can conclude that $e(X_{n+1},\lambda^\star) \geq \tau$, and so $\lambda^\star \in c(X_{n+1})$, proving that $c(X_{n+1})$ is indeed closed. 
\end{proof}


\begin{figure}[h]
\centering
\includegraphics[width=.5\textwidth]{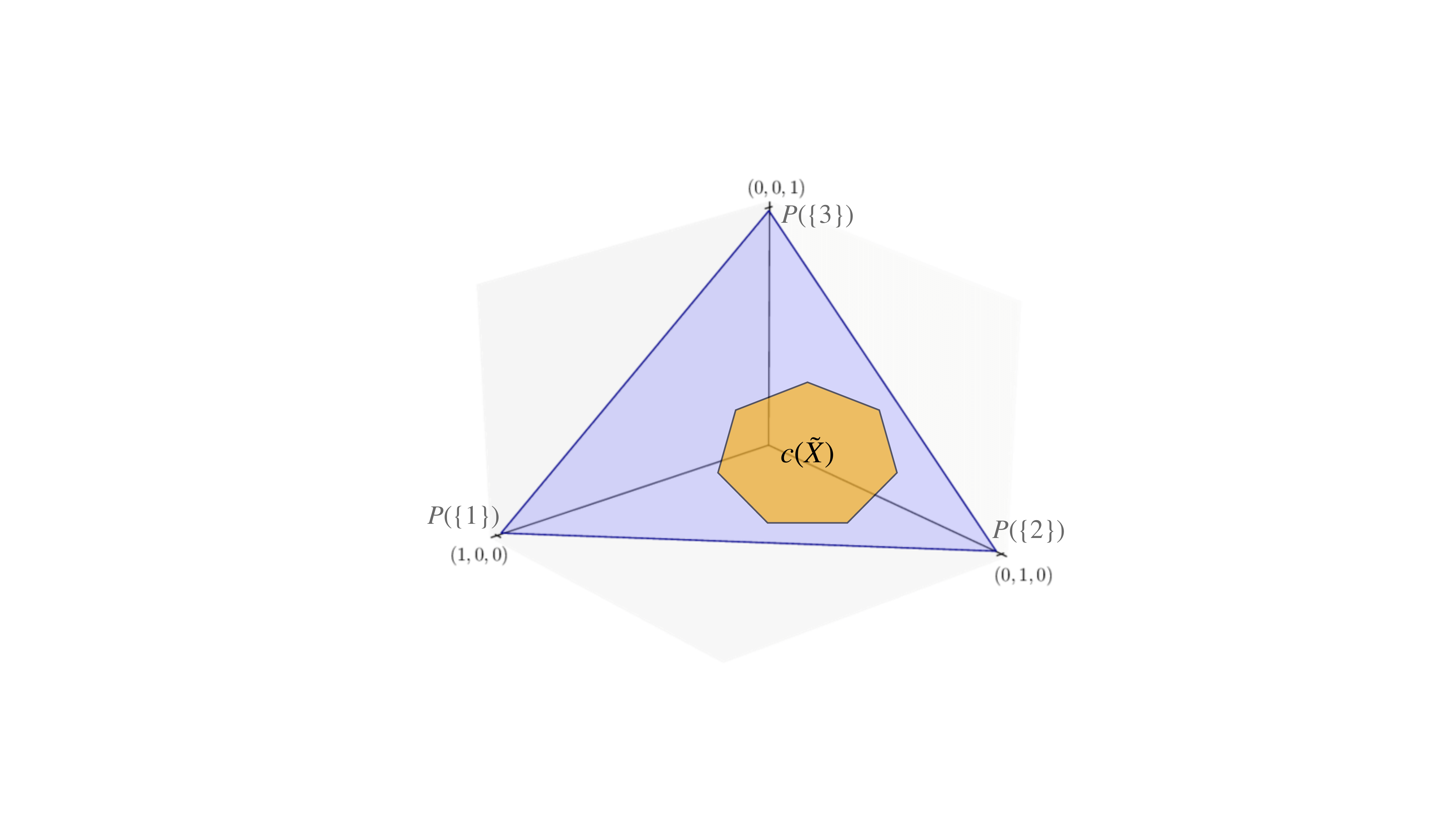}
\caption{Suppose we are in a $3$-class classification setting, so $\mathcal{Y}=\{1,2,3\}$. Then, any probability measure $P$ on $\mathcal{Y}$ can be seen as a probability vector. For example, suppose $P(\{1\})=0.6$, $P(\{2\})=0.3$, and $P(\{3\})=0.1$. We have that $P\equiv (0.6,0.3,0.1)^\top$. Since its elements are positive and sum up to $1$, probability vector $P$ belongs to the unit simplex $\Delta^2$ in $\mathbb{R}^3$, the purple (2D) triangle in the figure. Then, plausibility region $c(X_{n+1})$ is a closed and convex body in $\Delta^2$, such as the depicted orange (2D) heptagon (we depict it with a solid black border to highlight the fact that it is indeed closed).}
\label{fig1}
\centering
\end{figure}

As we can see, then, every $\lambda\in c(X_{n+1})$ uniquely identifies a Categorical distribution $\textup{Cat}(\lambda)$ parameterized by $\lambda$ itself. A visual representation for $c(X_{n+1})$ is given in Figure \ref{fig1}.
A consequence of Proposition \ref{prop1} is that for all $k\in\mathcal{Y}$, we can find $\underline{\lambda}_k,\overline{\lambda}_k\in [0,1]$, $\underline{\lambda}_k \leq \overline{\lambda}_k$, such that $\lambda_k \in [\underline{\lambda}_k,\overline{\lambda}_k]$, for all $\lambda \in c(X_{n+1})$.\footnote{As a consequence, a greedy algorithm to approximate the values of $\underline{\lambda}_k,\overline{\lambda}_k$, for all $k\in\mathcal{Y}$, is easy to design, e.g. based on random samples from the uniform $\mathrm{Unif}(c(X_{n+1}))$ on the plausibility region $c(X_{n+1})$.}
The following result relates $c(X_{n+1})$ to a credal region, that is, a closed and convex family of probabilities \citep{levi2}. It is an immediate consequence of Proposition \ref{prop1} and of the definition of $c(X_{n+1})$ in \eqref{plaus-reg}.

\begin{corollary}[Plausibility Region as a Credal Region]\label{cor-1}
        The plausibility region $c(X_{n+1})$ derived in \eqref{plaus-reg} is equivalent to the credal region $\mathcal{P}\coloneqq \{\textup{Cat}(\lambda): \lambda \in c(X_{n+1})\}$.
\end{corollary}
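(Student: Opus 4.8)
The plan is to exploit the fact that, because $\mathcal{Y}=\{1,\ldots,K\}$ is finite, the map $\lambda \mapsto \textup{Cat}(\lambda)$ is an affine homeomorphism between the simplex $\Delta^{K-1}$ and the space of probability measures on $\mathcal{Y}$, and then simply transport the convexity and closure already established in Proposition \ref{prop1} across this identification. First I would make the correspondence explicit: since a probability measure $P$ on the finite set $\mathcal{Y}$ is completely determined by the vector $(P(\{1\}),\ldots,P(\{K\}))^\top \in \Delta^{K-1}$, the assignment $\lambda \mapsto \textup{Cat}(\lambda)$, with $\textup{Cat}(\lambda)(\{k\})=\lambda_k$, is a bijection from $\Delta^{K-1}$ onto the set of all probability measures on $\mathcal{Y}$. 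By construction $\mathcal{P}$ is precisely the image of $c(X_{n+1})$ under this bijection, so it suffices to check that the bijection preserves convexity and closure.

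For convexity, I would note that the map is affine: for any $\lambda^{(1)},\lambda^{(2)}$ and $\beta \in [0,1]$, the mixture measure $\beta\,\textup{Cat}(\lambda^{(1)})+(1-\beta)\,\textup{Cat}(\lambda^{(2)})$ assigns mass $\beta\lambda^{(1)}_k+(1-\beta)\lambda^{(2)}_k$ to each $k$, hence equals $\textup{Cat}(\beta\lambda^{(1)}+(1-\beta)\lambda^{(2)})$. Since Proposition \ref{prop1} gives $\beta\lambda^{(1)}+(1-\beta)\lambda^{(2)}\in c(X_{n+1})$, this mixture lies in $\mathcal{P}$, which establishes convexity of $\mathcal{P}$.

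For closure I would observe that, on a finite outcome space, the usual topologies on probability measures (total variation, weak convergence, pointwise convergence of the atoms) all coincide and are exactly the topology transported from $\Delta^{K-1}$ through the coordinate map $P \mapsto (P(\{k\}))_k$, which is a homeomorphism onto the simplex. Thus $\mathcal{P}$ is closed in the space of measures if and only if $c(X_{n+1})$ is closed in $\Delta^{K-1}$, which is exactly the content of Proposition \ref{prop1}. Combining the two, $\mathcal{P}$ is a closed and convex family of probability measures, i.e., a credal region in the sense of Section \ref{imprecise}.

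I do not anticipate a genuine obstacle here, as the statement is essentially a change of viewpoint: the content has already been done in Proposition \ref{prop1}, and what remains is to read it in the measure space rather than the parameter space. The only point requiring a little care is the topological one — ensuring that ``closed'' is interpreted in a topology on measures that matches the one used for $c(X_{n+1})$ in Proposition \ref{prop1} — and this is harmless precisely because finiteness of $\mathcal{Y}$ collapses all these topologies to the Euclidean topology on the simplex, making the parametrization map a bona fide affine homeomorphism.
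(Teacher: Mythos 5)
Your proposal is correct and matches the paper's route: the paper treats the corollary as an immediate consequence of Proposition \ref{prop1} via the one-to-one correspondence $\lambda \leftrightarrow \textup{Cat}(\lambda)$, which is exactly the affine homeomorphism you make explicit. You simply spell out the transfer of convexity and closure that the paper leaves implicit.
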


Notice how $\mathcal{P}$ is a \textit{predictive} credal region. This is because its elements can be seen as predictive distributions resulting from a Dirichlet (conjugate) prior and a Categorical likelihood. In addition, as we shall see in Section \ref{main}, $\mathcal{P}$ can be used to derive a set of labels that contain the true one for $X_{n+1}$ with high probability. 


Recall that $\lambda_{n+1}$ denotes the ``true'' probability vector
over the $K$ labels, once we observe the new test example $X_{n+1}$, and consider the Categorical distribution $\textup{Cat}(\lambda_{n+1})$ parameterized by such a vector.
$\textup{Cat}(\lambda_{n+1})$ captures the intrinsic difficulty of labeling $X_{n+1}$. For an easy-to-categorize instance, we will have a low-entropy Categorical, and vice-versa for a highly ambiguous input $X_{n+1}$. We can also give a subjectivist interpretation to the credal region $\mathcal{P}$: we can think of $\lambda_{n+1}$ as the vector subsuming the opinions of all the experts, and that cannot be ``further refined'' given the available information. The sharper the disagreement between the expert around the right label for $X_{n+1}$ is, the wider $\mathcal{P}$ will be.
We have the following important result.

\begin{proposition}[Probabilistic Correctness]\label{prop2}
    Let $\alpha$ be the same significance level selected in \eqref{plaus-reg}. Then,
    $$\mathbb{P}[\textup{Cat}(\lambda_{n+1})\in \mathcal{P}]\geq 1-\alpha,$$
    where $\mathbb{P}$ depends on the statistical model relating expert opinions to plausibilities as outlined in Section \ref{cpr} and \cite[Equation (6)]{stutz}.
\end{proposition}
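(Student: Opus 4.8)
The plan is to reduce the claimed statement about the credal region $\mathcal{P}$ to the conformal coverage guarantee for the plausibility region $c(X_{n+1})$ recalled in Section \ref{cpr}, using the identification established in Corollary \ref{cor-1}. The observation driving the whole argument is that the map $\lambda \mapsto \textup{Cat}(\lambda)$ is a bijection between the simplex $\Delta^{K-1}$ and the set of Categorical distributions on $\mathcal{Y}$, under which $c(X_{n+1})$ corresponds exactly to $\mathcal{P}$. Consequently, membership in $\mathcal{P}$ carries no information beyond membership of the underlying parameter in $c(X_{n+1})$.

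First I would make the event equivalence precise. Writing $\Lambda_{n+1}$ for the (unobserved) random true plausibility vector whose realization is $\lambda_{n+1}$, the bijection of Corollary \ref{cor-1} gives
\begin{align*}
\{\textup{Cat}(\Lambda_{n+1}) \in \mathcal{P}\} = \{\Lambda_{n+1} \in c(X_{n+1})\}
\end{align*}
as events, since $\textup{Cat}(\Lambda_{n+1}) \in \mathcal{P}$ holds if and only if $\Lambda_{n+1}$ equals $\lambda$ for some $\lambda \in c(X_{n+1})$, i.e. if and only if $\Lambda_{n+1} \in c(X_{n+1})$. Taking probabilities of both sides under the same $\mathbb{P}$ then yields $\mathbb{P}[\textup{Cat}(\Lambda_{n+1}) \in \mathcal{P}] = \mathbb{P}[\Lambda_{n+1} \in c(X_{n+1})]$.

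Second I would invoke the conformal coverage guarantee of Section \ref{cpr}: under the exchangeability of $\{(X_i,\Lambda_i)\}_{i=1}^{n+1}$ and with $\tau$ chosen as the $\lfloor \alpha(n+1)\rfloor/n$ calibration quantile, the plausibility region satisfies $\mathbb{P}[\Lambda_{n+1} \in c(X_{n+1})] \geq 1-\alpha$. Combining this with the event equivalence above immediately gives $\mathbb{P}[\textup{Cat}(\lambda_{n+1}) \in \mathcal{P}] \geq 1-\alpha$, which is the claim.

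The argument is essentially bookkeeping; the substantive mathematics has already been carried out in establishing the bijection (Corollary \ref{cor-1}) and the coverage guarantee, which itself rests on exchangeability and standard split-conformal quantile calibration. The only point requiring care — and the main potential obstacle — is ensuring that the probability measure $\mathbb{P}$ appearing in the statement is exactly the one under which the plausibility-region guarantee holds, so that the two probabilities may legitimately be equated; this is where the statistical model relating expert opinions to plausibilities (Section \ref{cpr} and \cite[Equation (6)]{stutz}) must be invoked, and where the distinction between $\mathbb{P}_{\text{agg}}$ and $\mathbb{P}$ glossed over earlier would, if needed, be made rigorous.
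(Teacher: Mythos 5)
Your proposal is correct and follows exactly the paper's route: the paper's proof is a one-liner invoking Corollary \ref{cor-1} (the identification of $c(X_{n+1})$ with $\mathcal{P}$ via $\lambda \mapsto \textup{Cat}(\lambda)$) together with the coverage guarantee $\mathbb{P}[\Lambda_{n+1} \in c(X_{n+1})] \geq 1-\alpha$ from \cite[Equation (13)]{stutz}, which is precisely the event-equivalence-plus-conformal-guarantee argument you spell out. Your version simply makes the bookkeeping explicit, including the correct caveat about which measure $\mathbb{P}$ the guarantee is stated under.
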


\begin{proof}
    Immediate from Corollary \ref{cor-1} and \cite[Equation (13)]{stutz}.
\end{proof}

\subsection{Significance for IPML}\label{utility}

In this section, we argue that Proposition \ref{prop2} is extremely important for the field of Imprecise Probabilistic Machine Learning (IPML) \citep{caprio_IBNN,eyke,zaffalon}. 
We will further show (in the next section) that making this connection to imprecise probability allows us to derive more efficient predictive sets from these credal regions, compared to $\Psi(c(X_{n+1}))$ in \eqref{psi-3}.

In IPML, scholars tend to adopt either of the following two approaches. One approach is the robust statistician one, also called Huberian \citep{huber}. This is more frequentist in nature, hence the true data generating process -- $\textup{Cat}(\lambda_{n+1})$ in our case -- is a well-defined concept. Those who take this approach proceed in two different ways: either they assume that $\textup{Cat}(\lambda_{n+1})\in \mathcal{P}$ (in which case a result like Proposition \ref{prop2} is very interesting, as it allows to forego such requirement), or they verify that a calibration à la Proposition \ref{prop2} holds for the methodology they propose \citep{opt_test,gao,mortier23a,xing,siu-krik}. In particular, in the case of transductive conformal prediction, \cite{ryan-long} shows that the upper envelope $\overline{\Pi}$ of the credal region $\mathcal{M}(\overline{\Pi})$ induced by conformal transducer $\pi$ as we discussed in Section \ref{intro}, is the minimal outer consonant approximation of the true data generating process. This means that $\overline{\Pi}$ is the narrowest upper bound for the true distribution, that also satisfies the consonance assumption. We can then conclude -- although it is not explicitly shown in their work -- that credal region $\mathcal{M}(\overline{\Pi})$ contains the true data generating process. This is important in Cella and Martin's framework because it is linked to the concept of type-2 validity: since the upper probability $\overline{\Pi}$ provides control on erroneous predictions uniformly \cite[Definition 3]{cella}, so does the true probability if the credal region is well-calibrated. Of course, we cannot directly use their result in our context because 
we do not have access to crisp labels. In a sense, then, in Proposition \ref{prop2} we prove (a probabilistic version of) this property for
the ambiguous ground truth case in classification problems.


Another approach is the deFinettian or Walleyan one \citep{definetti1,definetti2,walley}, in which it is posited that ``probability does not exist'', and what we call probability measure is just a way of capturing the subjective assessments of the likelihood of events to obtain, according to the agent. In light of this, the concept of ``true data generating process'' is absent, and so it does not make sense to look for probabilistic guarantees à la Proposition \ref{prop2}. 


\subsection{Relation to \cite{CELLA20221,cella}}
\label{martincella}

Given calibration data with crisp labels in $\mathcal{Y}$ as outlined in the beginning of Section \ref{cpr}, \cite{CELLA20221,cella} consider the conformal transducer
$\pi : \mathcal{Y} \rightarrow [0, 1]$, which assigns a $p$-value to each possible label in $\mathcal{Y}$ \citep[Definition 2]{CELLA20221}. It uniquely identifies a credal region, assuming \textit{consonance}.
Here, $\pi$ is typically interpreted as telling us how ``in line'' the pair $(x_{n+1},\tilde{y})$ is with the previously observed data $(\mathbf{x^n},\mathbf{y^n})$. 
A value closer to $1$ means that seeing $(X_{n+1},Y_{n+1}) = (x_{n+1},\tilde{y})$ would align with the preceding observations, and vice versa for values closer to $0$. Once $\pi$ is obtained, consonance posits that there exists at least one label $\tilde y$ such that $\pi(\tilde y) = 1$. Loosely, this means that the label $\tilde{y}$ that makes pair $(x_{n+1},\tilde{y})$  ``the most conformal to'' the observations $(\mathbf{x^n},\mathbf{y^n})$, is required to have the highest possible value $1$. This can be obtained artificially by setting $\pi$ for $\tilde{y} \in \argmax_{y} \pi(y)$ to $1$ \cite[Section 7]{CELLA20221}.
Then, an upper probability (that is, the upper envelope of a credal region) $\overline{\Pi}$ is derived for $\pi$ by letting $\overline{\Pi}(A)=\sup_{y \in A} \pi(y)$, for all subsets $A$ of the label space $\mathcal{Y}$. 
In turn, a credal region is defined 
as $\mathcal{M}(\overline{\Pi})=\{P : P(A) \leq \overline{\Pi}(A)\}$, for all $A \subseteq \mathcal{Y}$. That is, $\mathcal{M}(\overline{\Pi})$ contains all the probabilities on $\mathcal{Y}$ that are set-wise dominated by $\overline{\Pi}$ where $P(A) \equiv P(Y_{n + 1} \in A)$.

In comparison to \cite{CELLA20221,cella}, we avoid the consonance assumption. In \cite{CELLA20221,cella}, this assumptions is primarily required to bridge the so-called possibilistic approach to imprecise probabilities \cite[Chapter 4]{intro_ip} with conformal prediction. Moreover, we do not explicitly construct the conformal transducer, also due to the fact that we work with probability vectors instead of ``precise'' (that is, one-hot) labels. In this sense, our work subsumes and extends \cite{CELLA20221,cella}.
Furthermore, 
our credal region $\mathcal{P}$ establishes a coverage guarantee on the true data generating process being in $\mathcal{P}$, and it also enjoys (a version of) {\em type-2 validity}, a notion important for uncertainty quantification, introduced for the first time in  \cite[Definition 2]{cella}. Let us be more formal about it.

\begin{proposition}[A Version of Type-2 Validity for Our Credal Region]\label{our-validity}
    The credal region $\mathcal{P}$ in Corollary \ref{cor-1} is $[\delta/(1-\alpha)]$-type-2 valid, where $\alpha$ is the quantity chosen in \eqref{plaus-reg}. That is,
    \begin{equation}\label{type2-eq}
        \mathbb{P} [ \overline{P}(A) \leq \delta \text{, }  Y_{n+1} \in A ] \leq \frac{\delta}{1-\alpha},
    \end{equation}
    for all $\delta \in [0,1]$, all $n\in\mathbb{N}$, and all $A \subseteq \mathcal{Y}$. Here as in Proposition \ref{prop2}, $\mathbb{P}$ depends on the statistical model relating expert opinions to plausibilities.
\end{proposition}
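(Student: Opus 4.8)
The plan is to bound the joint probability in \eqref{type2-eq} by combining the probabilistic correctness guarantee of Proposition \ref{prop2} with an elementary observation about what the event $\{\overline{P}(A) \leq \delta\}$ implies for the true plausibility vector $\lambda_{n+1}$. First I would introduce the event $G \coloneqq \{\textup{Cat}(\lambda_{n+1}) \in \mathcal{P}\}$, whose complement has probability at most $\alpha$ by Proposition \ref{prop2}. The strategy is to condition on $G$: on this event, the true distribution $\textup{Cat}(\lambda_{n+1})$ is one of the members $P$ of the credal region $\mathcal{P}$, and therefore it is pointwise dominated by the upper probability, i.e. $\textup{Cat}(\lambda_{n+1})(A) \leq \overline{P}(A)$ for every $A \subseteq \mathcal{Y}$, by the very definition of $\overline{P} = \sup_{P \in \mathcal{P}} P$.

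Next I would exploit this domination to convert the event constraining $\overline{P}(A)$ into one constraining the true probability of the label. On $G$, the conjunction $\{\overline{P}(A) \leq \delta\}$ forces $\textup{Cat}(\lambda_{n+1})(A) = \lambda_{n+1}(A) \leq \delta$, where I write $\lambda_{n+1}(A) = \sum_{k \in A}\lambda_{n+1,k}$ for the true probability mass the Categorical places on $A$. Hence, on $G$, the event $\{\overline{P}(A) \leq \delta,\ Y_{n+1} \in A\}$ is contained in $\{\lambda_{n+1}(A) \leq \delta,\ Y_{n+1} \in A\}$. The key probabilistic lemma is then that, conditionally on $\lambda_{n+1}$, the label $Y_{n+1}$ is drawn from $\textup{Cat}(\lambda_{n+1})$, so that $\mathbb{P}[Y_{n+1} \in A \mid \lambda_{n+1}] = \lambda_{n+1}(A)$. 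Taking expectations and using the indicator bound $\mathbbm{1}\{\lambda_{n+1}(A) \leq \delta\}\,\lambda_{n+1}(A) \leq \delta$ yields
\begin{align*}
    \mathbb{P}[\overline{P}(A) \leq \delta,\ Y_{n+1} \in A,\ G] &\leq \mathbb{P}[\lambda_{n+1}(A) \leq \delta,\ Y_{n+1} \in A] \\
    &= \mathbb{E}\big[ \mathbbm{1}\{\lambda_{n+1}(A) \leq \delta\}\, \lambda_{n+1}(A) \big] \leq \delta.
\end{align*}

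Finally I would assemble the two pieces. Writing the total probability as the sum over $G$ and its complement, I would bound the contribution on $G$ by $\delta$ as above, but the naive split only gives $\delta + \alpha$. To obtain the sharper factor $1/(1-\alpha)$, the cleaner route is to work with the conditional probability given $G$: since $\mathbb{P}[G] \geq 1-\alpha$, we have $\mathbb{P}[\,\cdot\,,G] \leq \mathbb{P}[G]\,\mathbb{P}[\,\cdot \mid G]$ and, crucially, the displayed bound applied conditionally on $G$ gives $\mathbb{P}[\overline{P}(A) \leq \delta,\ Y_{n+1} \in A \mid G] \leq \delta/(1-\alpha)$, because the conditioning can inflate each probability by at most the factor $1/\mathbb{P}[G] \leq 1/(1-\alpha)$ when the unconditional bound is $\delta$; one then drops the outer factor $\mathbb{P}[G] \leq 1$. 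I expect the main obstacle to be precisely this last bookkeeping step: one must be careful about where the factor $1/(1-\alpha)$ enters and argue it rigorously rather than heuristically, checking that the conditioning on the coverage event $G$ interacts correctly with the expectation identity $\mathbb{P}[Y_{n+1} \in A \mid \lambda_{n+1}] = \lambda_{n+1}(A)$ and that $G$ is measurable with respect to the calibration data in a way that does not disturb the draw of $Y_{n+1}$ from $\textup{Cat}(\lambda_{n+1})$.
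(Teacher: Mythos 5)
Your proposal follows essentially the same route as the paper's proof: introduce the coverage event $G=\{\textup{Cat}(\lambda_{n+1})\in\mathcal{P}\}$, give it probability at least $1-\alpha$ via Proposition \ref{prop2}, observe that on $G$ the true Categorical is dominated by $\overline{P}$ so that on $G\cap\{\overline{P}(A)\le\delta\}$ the true probability of $A$ is at most $\delta$, and then fold in the factor $1/(1-\alpha)$. Your middle step is in fact more explicit than the paper's (the identity $\mathbb{P}[Y_{n+1}\in A\mid\lambda_{n+1}]=\lambda_{n+1}(A)$ together with the indicator bound), and the ``last bookkeeping step'' you flag as the main obstacle is precisely where the paper's own argument is also heuristic: it bounds the probability of the event intersected with $G$ by $\delta$, divides by $\mathbb{P}[G]\ge 1-\alpha$, and silently identifies the resulting conditional bound with the unconditional one, so the contribution of $\{\overline{P}(A)\le\delta,\ Y_{n+1}\in A\}\cap G^{c}$ is never separately controlled (a fully rigorous split only yields $\delta+\alpha$). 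Your concern at that point is therefore well placed, but your proposal is no less complete than the published proof.
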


\begin{proof}
    Pick any $\delta \in [0,1]$, any $n\in\mathbb{N}$, and any $A \subseteq \mathcal{Y}$. 
    Notice that $\mathbb{P}[\overline{P}(A) \leq \delta \text{, } Y_{n+1} \in A] \leq \mathbb{P}[Y_{n+1} \in A]$. Now, if $\textup{Cat}(\lambda_{n+1})\in\mathcal{P}$, then $\mathbb{P}[Y_{n+1} \in A] \leq \overline{P}(A)$. In addition, given our assumption on $\delta$, we know that $\overline{P}(A) \leq \delta$. Hence, we can conclude that if $\textup{Cat}(\lambda_{n+1})\in\mathcal{P}$, then $\mathbb{P}[\overline{P}(A) \leq \delta \text{, } Y_{n+1} \in A] \leq \delta$. But $\textup{Cat}(\lambda_{n+1})\in\mathcal{P}$ happens with $\mathbb{P}$-probability $(1-\alpha)$ as shown in Proposition \ref{prop2}, and so we have that $\mathbb{P}[\overline{P}(A) \leq \delta \text{, } Y_{n+1} \in A] \leq \delta/[1-\alpha]$. 
\end{proof}
 In words, this means that the (true/aggregated) probability of a set $A$ having small upper probability, and yet containing the true output $Y_{n+1}$ for a given new input $X_{n+1}$, is controllably small, and it depends on the parameters $\alpha$ and $\delta$ chosen by the user. When giving the bound, we need to divide by $1-\alpha$ because we need to take into account what is the $\mathbb{P}$-probability that the true distribution $\textup{Cat}(\lambda_{n+1})$ belongs to the credal region $\mathcal{P}$. Notice that in \eqref{type2-eq} we find a slightly looser bound than the one we would derive if we were not dealing with ambiguous ground truth, i.e. if we knew that $\mathbb{P}[\textup{Cat}(\lambda_{n+1})\in \mathcal{P}] = 1$. To see that this is indeed the case, notice that if we pick $\delta=\alpha=0.05$, then the bound in \eqref{type2-eq} is $\approx 0.526$, slightly larger than the one we would have ($0.5$) if we were not facing ambiguity.

\section{Improving over Conformal Prediction Sets}\label{main}

In this section, we show how prediction set $\Psi(c(X_{n+1}))$ can be improved by using the IHDS of the lower probability $\underline{P}$ of credal region $\mathcal{P}$. In addition, we show how $\mathcal{P}$ can be used to quantify and disentangle aleatoric and epistemic uncertainties. Specifically, let us denote by $\underline{\lambda}$ the vector whose $k$-th entry is $\underline{\lambda}_k=\underline{P}(\{k\})$, and by $\overline{\lambda}$ the vector whose $k$-th entry is $\overline{\lambda}_k=\overline{P}(\{k\})$. Then, we can rewrite \eqref{exact-lp} as
\begin{align}\label{exact-lp-discr}
    \underline{P}(A)=\max\left\lbrace{\sum_{k\in A}\underline{\lambda}_k, 1-\sum_{k\in A^c}\overline{\lambda}_k}\right\rbrace, \quad \forall A \subseteq \mathcal{Y}.
\end{align}
This gives us an easy way to compute $\underline{P}(A)$ in practice, to then derive IDHSs.
We can use \eqref{exact-lp-discr} and Definition \ref{ihdr}.(i) to derive Algorithm \ref{algo-1}, a greedy algorithm to build $\text{IS}_{\mathcal{P},\delta}$. In turn, IHDS $\text{IS}_{\mathcal{P},\delta}$ is used to derive a predictive set narrower than the conformal one $\Psi(c(X_{n+1}))$ from \eqref{psi-3}, and that retains the same probabilistic guarantees.
Recall that Definition \ref{ihdr}.(i) gives us a probabilistic guarantee that holds for all distributions in the credal region $\mathcal{P}$ that we built in Corollary \ref{cor-1}. This is a slightly weaker guarantee than that of classical conformal prediction, whose guarantee instead holds for all possible exchangeable distributions $P$ on $\mathcal{Y}$. 
Thanks to Proposition \ref{prop2}, though, a consequence of the following Proposition \ref{prop3} is that the loss of coverage for the IHDS with respect to a classical CPS is negligible. 

\begin{proposition}[Improving on Conformal via IPs]\label{prop3}
    Let $\alpha$ be the same significance level selected in \eqref{plaus-reg}. Pick any $\delta\in [0,1]$. Then, $\text{IS}_{\mathcal{P},\delta} \subseteq \Psi(c(X_{n+1}))$, and the inclusion is strict for some value of $\delta$. In addition,  $\mathbb{P}[Y_{n+1}\in \text{IS}_{\mathcal{P},\delta}]\geq (1-\delta)(1-\alpha)$ where $Y_{n+1}$ denotes the correct label for input $X_{n+1}$.
    \label{prop6}
\end{proposition}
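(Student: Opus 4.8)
The plan is to prove the three assertions in turn: the inclusion $\text{IS}_{\mathcal{P},\delta}\subseteq\Psi(c(X_{n+1}))$, its strictness, and the coverage bound. I would begin with the elementary observation that $\Psi(c(X_{n+1}))$ is itself an admissible competitor in Definition \ref{ihdr}. For each $\lambda\in c(X_{n+1})$, let $\text{HDS}_{1-\delta}(\lambda)$ denote the minimal top block of $\lambda$ (sorted in descending order) whose mass is at least $1-\delta$; by the definition \eqref{psi-3} every such label lies in $\Psi(c(X_{n+1}))$, so $\lambda(\Psi(c(X_{n+1})))\ge\lambda(\text{HDS}_{1-\delta}(\lambda))\ge 1-\delta$. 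Taking the infimum over $\lambda\in c(X_{n+1})$ gives $\underline{P}(\Psi(c(X_{n+1})))\ge 1-\delta$, so $\Psi(c(X_{n+1}))$ satisfies condition (i). The minimality condition (ii) then immediately yields $|\text{IS}_{\mathcal{P},\delta}|\le|\Psi(c(X_{n+1}))|$, i.e. the ``non-broader'' half of the claim.

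To upgrade cardinality to genuine inclusion I would prove the contrapositive statement $k\notin\Psi(c(X_{n+1}))\Rightarrow k\notin\text{IS}_{\mathcal{P},\delta}$, supported by the key lemma that $k\notin\Psi(c(X_{n+1}))$ forces $\overline{\lambda}_k\le\delta$. The lemma follows by evaluating at the $\lambda^\star\in c(X_{n+1})$ attaining $\overline{\lambda}_k$: since $k\notin\text{HDS}_{1-\delta}(\lambda^\star)$, the labels strictly above $k$ carry mass $\ge 1-\delta$, and together with $k$ itself they are disjoint, so $(1-\delta)+\overline{\lambda}_k\le 1$. Consequently any out-of-$\Psi$ label has both small upper and small lower probability, $\underline{\lambda}_k\le\overline{\lambda}_k\le\delta$. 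Reading off \eqref{exact-lp-discr}, appending $k$ to a set $A$ raises the two branches $\sum_{j\in A}\underline{\lambda}_j$ and $1-\sum_{j\notin A}\overline{\lambda}_j$ by $\underline{\lambda}_k$ and $\overline{\lambda}_k$ respectively; hence the greedy Algorithm \ref{algo-1} deprioritises such labels, and I would argue inductively, using $\underline{P}(\Psi(c(X_{n+1})))\ge 1-\delta$ (coverage is attainable inside $\Psi$), that it never needs to select one. This step is the main obstacle: $\text{IS}_{\mathcal{P},\delta}$ need not be unique, and an out-of-$\Psi$ label could a priori be drawn in through the $1-\sum_{j\notin A}\overline{\lambda}_j$ term; the delicate point is to show that the essentiality of $k$ in a minimal cover (removing it drops $\underline{P}$ below $1-\delta$) is incompatible with $\overline{\lambda}_k\le\delta$ and with the fact that $\{j:\lambda_j>\lambda_k\}$ already covers.

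For strictness I would exhibit an explicit credal region. Take $K=5$, $\delta=0.3$, and let $c(X_{n+1})$ be the convex hull of $(0.4,0.1,0.1,0.1,0.3)$ and its three analogues obtained by moving the $0.4$ through the first four coordinates, with the last coordinate frozen at $0.3$. Then $\overline{\lambda}_5=0.3$ and label $5$ appears in $\text{HDS}_{0.7}$ of every vertex, so $5\in\Psi(c(X_{n+1}))$ and $\Psi(c(X_{n+1}))=\{1,\dots,5\}$; on the other hand $\underline{\lambda}_i=0.1$ for $i\le 4$, and a direct computation from \eqref{exact-lp-discr} shows that $\{1,2,3,4\}$ (covering via $1-\overline{\lambda}_5=0.7$) is the unique minimal cover, so $\text{IS}_{\mathcal{P},\delta}=\{1,2,3,4\}\subsetneq\Psi(c(X_{n+1}))$, giving strictness for this $\delta$.

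Finally, for the coverage bound I would condition on $\mathcal{G}=\sigma(\text{calibration data},X_{n+1},\Lambda_{n+1})$. Both $\text{IS}_{\mathcal{P},\delta}$ and the event $E=\{\textup{Cat}(\lambda_{n+1})\in\mathcal{P}\}$ are $\mathcal{G}$-measurable, and $\mathbb{P}[Y_{n+1}\in\text{IS}_{\mathcal{P},\delta}\mid\mathcal{G}]=\lambda_{n+1}(\text{IS}_{\mathcal{P},\delta})$. On $E$ we have $\lambda_{n+1}\in c(X_{n+1})$, hence $\lambda_{n+1}(\text{IS}_{\mathcal{P},\delta})\ge\underline{P}(\text{IS}_{\mathcal{P},\delta})\ge 1-\delta$ by Definition \ref{ihdr}.(i). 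Therefore $\mathbb{P}[Y_{n+1}\in\text{IS}_{\mathcal{P},\delta}]\ge\mathbb{E}[\mathbbm{1}_E\,\lambda_{n+1}(\text{IS}_{\mathcal{P},\delta})]\ge(1-\delta)\,\mathbb{P}[E]$, and $\mathbb{P}[E]\ge 1-\alpha$ by Proposition \ref{prop2}, yielding $(1-\delta)(1-\alpha)$. I expect this last part to be routine; the genuine work is the inclusion in the second paragraph.
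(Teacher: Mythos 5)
Your conditioning argument for the probabilistic guarantee is correct and is essentially a more careful writing of the paper's one-line justification (Proposition \ref{prop2} together with $\underline{P}(\text{IS}_{\mathcal{P},\delta}) \geq 1-\delta \Rightarrow P(\text{IS}_{\mathcal{P},\delta}) \geq 1-\delta$ for all $P\in\mathcal{P}$). The problem is the central claim, the inclusion $\text{IS}_{\mathcal{P},\delta} \subseteq \Psi(c(X_{n+1}))$. Your first paragraph only delivers the cardinality bound $|\text{IS}_{\mathcal{P},\delta}| \le |\Psi(c(X_{n+1}))|$, and the upgrade to a genuine inclusion --- which you yourself flag as ``the main obstacle'' --- is never carried out. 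The difficulty you identify is real: your lemma that $k\notin\Psi(c(X_{n+1}))$ forces $\underline{\lambda}_k\le\overline{\lambda}_k\le\delta$ is correct, but since minimal sets satisfying condition (i) of Definition \ref{ihdr} need not be unique and the branch $1-\sum_{j\in A^c}\overline{\lambda}_j$ of \eqref{exact-lp-discr} can still rise by up to $\delta$ when such a $k$ is adjoined, nothing in your sketch excludes a minimum-cardinality cover containing an out-of-$\Psi$ label. The paper avoids this entirely by a different route: it introduces the precise highest density sets $\text{HDS}_{P,\delta}$, invokes the result of \cite[Page 3]{coolen} that $\text{IS}_{\mathcal{P},\delta}\subseteq\bigcup_{P\in\mathcal{P}}\text{HDS}_{P,\delta}$ (with strictness for some $\delta$), and then observes that this union is exactly $\Psi(c(X_{n+1}))$. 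The step you are missing is precisely the content of that cited result; either import it or supply the swap argument in full.

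A secondary issue concerns your strictness example. The convex hull of four vectors all having fifth coordinate frozen at $0.3$ is a lower-dimensional polytope inside the simplex, whereas a plausibility region \eqref{plaus-reg} is the intersection of $\Delta^{K-1}$ with a single half-space $\{\lambda:\sum_k\lambda_k E(X_{n+1},k)\ge\tau\}$; your $\mathcal{P}$ is therefore not realizable as a $c(X_{n+1})$, so the example does not witness strictness for the credal regions the proposition is actually about. The paper again sidesteps this by inheriting the strictness statement from \cite{coolen}. In short: the decomposition of the proof is sound, the coverage bound is fine, but the inclusion --- the part of the proposition that justifies the title ``Improving on Conformal via IPs'' --- remains unproven as written.
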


\begin{proof}
    First, let us introduce the concept of $(1-\delta)$-(Precise) Highest Density Set $\text{HDS}_{P,\delta}$, for some $P\in\mathcal{P}$ \cite{hyndman}, 
$$\text{HDS}_{P,\delta}\coloneqq\{y\in \mathcal{Y} : P(\{y\}) \geq P^\delta\},$$
where $P^\delta$ is the largest constant such that 

\begin{align*}
    P \left[Y \in \text{HDS}_{P,\delta}\right] \geq 1-\delta.
\end{align*}
By \cite[Page 3]{coolen}, we know that 
\begin{align*}
    \text{IS}_{\mathcal{P},\delta} &\subseteq \bigcup_{P\in\mathcal{P}}\text{HDS}_{P,\delta}\\
    &= \{y\in \mathcal{Y} : \exists P\in\mathcal{P} \text{, } P(\{y\}) \geq P^\delta\},
\end{align*}
for all $\delta \in [0,1]$, and the inclusion is strict for some value of $\delta$. The first part of the proof is concluded by noting that $\cup_{P\in\mathcal{P}}\text{HDS}_{P,\delta}=\Psi(c(X_{n+1}))$. The probabilistic guarantee according to $\mathbb{P}$ is a consequence of Proposition \ref{prop2} and the fact that $\underline{P}(\text{IS}_{\mathcal{P},\delta}) \geq 1-\delta \implies P(\text{IS}_{\mathcal{P},\delta}) \geq 1-\delta$, for all $P\in\mathcal{P}$, by the definition of lower probability.
\end{proof}

In standard conformal prediction -- that is, when we calibrate on crisp-labeled data -- the probabilistic guarantee that we derive is of the form $\mathbb{P}[Y_{n+1} \in \text{CPS}] \geq 1-\delta$, where we denote by $\text{CPS}$ the conformal prediction set, and by $\delta$ the same threshold as in Proposition \ref{prop3}. In the present work, we need to take into account the imprecision coming from the ambiguous labeling, which appears in the form of $1-\alpha$ in the probabilistic guarantee of Proposition \ref{prop3}. In our imprecise probabilistic framework, $1-\alpha$ is the guarantee that we have on the true distribution $\textup{Cat}(\lambda_{n+1})$ belonging to the credal region $\mathcal{P}$ (see Proposition \ref{prop2}). But $1-\alpha$ is chosen by the user, so letting e.g. $1-\alpha = 0.95$ or $1-\alpha = 0.99$ will yield a negligible coverage loss with respect to the classical conformal prediction setting. It is also worth noting that the guarantee in Proposition \ref{prop3} is the same derived for ``standard'' conformal prediction in the case of ambiguous labels in \cite[Equation (43)]{stutz}.


\begin{algorithm}
\caption{Computing Imprecise Highest Density Set $\text{IS}_{\mathcal{P},\delta}$}\label{algo-1}
\begin{algorithmic}
\item 
\hspace*{\algorithmicindent} \textbf{Input:} Vector $\underline{\lambda}$ 
\hspace*{\algorithmicindent} \textbf{Parameter:} Significance level $\delta\in [0,1]$
\\
\hspace*{\algorithmicindent} \textbf{Output:} IHDS $\text{IS}_{\mathcal{P},\delta}$
\item
\textbf{Step 1} For all $A \subseteq \mathcal{Y}$, compute $\underline{P}(A)$ using \eqref{exact-lp-discr}. \Comment{$2^K$ possible $A$'s for $|\mathcal{Y}| = K$ labels.} \\
\textbf{Step 2} Sort the $A$'s in ascending order of their lower probability $\underline{P}(A)$; if two or more sets have the same lower probability, put the one with lower cardinality first. Denote the sorted order as $\{A_{\rho_1}, \ldots, A_{\rho_{2^K}}\}$.
\item
\textbf{Step 3} \For {$k\in\{1,\ldots,2^K\}$}
                \If {$\underline{P}(A_{\rho_k}) \geq 1-\delta$}
                \State $\text{IS}_{\mathcal{P},\delta}=A_{\rho_k}$
                \State \textbf{break}
                \EndIf
                \EndFor
                \State \textbf{return} $\text{IS}_{\mathcal{P},\delta}$
\end{algorithmic}
\end{algorithm}

Finally, let us point out that we can use credal regions to quantify and disentangle between aleatoric and epistemic uncertainties (AU and EU, respectively) in the analysis at hand \citep{ednn}. AU refers to the uncertainty that is inherent to the data generating process; as such, it is {irreducible}. EU, instead, refers to the lack of knowledge about the data generating process; as such, it is {reducible}. It can be lessened on the basis of additional data, e.g. by retraining the model using an augmented training set \citep{vivian}.
On the other hand, since AU is irreducible, there is an increasing need for ML techniques that are able to detect and flag excess of AU, so that the user can ``proceed with caution''.

Recall that, for a single Categorical distribution $P=\textup{Cat}(\lambda)$ on $\mathcal{Y}$, the (Shannon) entropy is defined as
\begin{equation}\label{entr-cat}
    H(P)=-\sum_{k=1}^K \lambda_k \log_2(\lambda_k).
\end{equation}
The credal versions of the Shannon entropy as proposed by \cite{abellan,eyke} are $\overline{H}(\mathcal{P})\coloneqq\sup_{P\in\mathcal{P}}H(P)$ and $\underline{H}(\mathcal{P})\coloneqq\inf_{P\in\mathcal{P}}H(P)$, called the \emph{upper} and \emph{lower (Shannon) entropy}, respectively. The upper entropy is a measure of total uncertainty (TU), since it represents the minimum level of predictability associated with the elements of $\mathcal{P}$. In \cite{abellan,eyke}, the authors {\em postulate} that TU can be decomposed additively as a sum of aleatoric and epistemic uncertainties, and that the latter can be specified as the difference between upper and lower entropy, thus obtaining
\begin{equation}\label{decomp}
    \underbrace{\overline{H}(\mathcal{P})}_{\text{total uncertainty }\text{TU}(\mathcal{P})}=\underbrace{\underline{H}(\mathcal{P})}_{\text{aleatoric uncertainty }\text{AU}(\mathcal{P})} +  \underbrace{\left[\overline{H}(\mathcal{P})-\underline{H}(\mathcal{P})\right].}_{\text{epistemic uncertainty }\text{EU}(\mathcal{P})}
\end{equation}
Other measures based on credal regions are also available (see \cite{andrey,hofman2024quantifying,eyke} or Appendix \ref{appuq} for a few examples) and they can be used in place of upper and lower entropy to quantify EU and AU within our credal region $\mathcal{P}$, as long as the measure chosen for the total uncertainty is bounded. We also note in passing that the decomposition in \eqref{decomp} is extremely important for the field of conformal prediction, since, as pointed out in \cite[Section 5]{eyke}, the role of aleatoric and epistemic uncertainties in (classical) conformal prediction is in general not immediately clear. Our approach allows to overcome this shortcoming. 

That being said, there is an active ongoing debate around whether TU actually decomposes {\em additively} into AU and EU \citep{baan2023uncertaintynaturallanguagegeneration,gruber2023sourcesuncertaintymachinelearning,kirchhof,ulmer2024uncertaintynaturallanguageprocessing,lisa-wimm}. Furthermore, \citet{mucsanyi2024benchmarking} convincingly proved that uncertainty measures are very context- and task-specific, and so -- even under the assumption of an additive decomposition -- the choice of upper and lower entropy in \eqref{decomp} is subject to the specific analysis that the user is carrying out. Because of this, we defer to future work looking for the ``correct'' measures for AU and EU, and how such a choice depends on the problem at hand.

Let us add a remark. In many modern-day ML and AI methodologies that allow to disentangle and quantify different types of uncertainties, the uncertainty quantification (UQ) part is not an intrinsic feature of the model, but rather something that is put ``on top of'' the main procedure. In contrast, uncertainty is inherent to the method we propose, via the plausibility region $c(X_{n+1})$ (and so the credal region $\mathcal{P}$). We are then able to quantify the amount of AU, and discern its types. They may even be used to build an abstaining option: if TU is ``too high'', the IHDS should not be returned, and instead the excess of which between AU and EU is responsible for the ambiguity should be reported. 

It is worth noting that our method is not the only one exhibiting intrinsic UQ capabilities. Another notable class of such models are evidential deep learning ones (see e.g. \citet{gao2024comprehensivesurveyevidentialdeep} for a comprehensive survey), in which uncertainty is modeled via second-order distributions, that is, using distributions over distributions. Second-order methods, though, have been recently shown to suffer from major pitfalls when used to quantify predictive EU due to their sensitivity to regularization parameters, and to underestimate predictive AU \citep{bengs2022pitfalls,pandey,mira}. In addition, they are not immediately comparable to credal methods because the imprecise probabilistic apparatuses that underpin them are inherently different. Indeed, contrary to second-order methods, credal regions are (convex and closed) collections of first-order probabilities. 

Let us also add another comment. It is true that the results in \citet{CELLA20221,cella} could in principle be related to second-order methods. This because -- through the consonance assumption $\max_{y \in \mathcal{Y}} \pi(y)=1$ -- they effectively make the upper probability $\overline{\Pi}$ of the credal region $\mathcal{M}(\overline{\Pi})$ associated to conformal prediction a consonant plausibility function. In turn, the latter corresponds to a possibility function \citep{zadeh2}, which is a second-order distribution.
In this paper, instead, we do not need consonance to derive our credal region, and so the connection with second-order approaches, if it exists, is more subtle. We defer looking for it to future work.


\section{Experiments}\label{experiments}

We verify the proposed algorithm on three datasets with ambiguous ground truth, including the toy and Dermatology DDx datasets from \cite{StutzTMLR2023,stutz} (``Toy'' and ``Derm'', respectively), and CIFAR-10H \citep{peterson2019human} (``Cifar10h''). For Derm, whose details are discussed in Appendix \ref{derm-details}, we use risk labels as classes, classifying cases into low, medium and high risk. For CIFAR-10H, for computational convenience, we only consider data points whose annotated ground truth label in the original CIFAR-10 dataset is within the three classes (\textit{airplane}, \textit{automobile}, and \textit{bird}). The toy dataset contains 1,000 data points, Dermatology DDx contains 1,947 data points, and CIFAR-10H contains 3,000 data points. We split each dataset into random calibration and testing sets in a 50\%-50\% ratio. Furthermore, we use estimated classifier confidence scores as the conformity functions. Specifically, for the Toy dataset, we train a single-layer MLP with 100 hidden neurons, achieving an accuracy of 77.5\%. For CIFAR-10H, we employ a ResNet50~\citep{he2015deepresiduallearningimage} model trained on the original CIFAR-10 training set, obtaining an accuracy of 93.6\%. For the Derm dataset, we rely on the confidence scores collected from the testing set reported by~\citep{stutz}.

Before going on, let us pause here to add a remark on CIFAR-10H. The latter was built upon the test set of the original CIFAR-10 dataset. To reflect human perceptual uncertainty, each image in CIFAR-10H was annotated by 50 people. In our pre-processing, we first filtered out data points whose annotated labels in the original CIFAR-10 dataset were not included in $\{\text{airplane}, \text{automobile}, \text{bird}\}$. Then, for the remaining data points, we extracted the annotated probabilities of the three classes, and normalized them to sum up to one. For instance, let the original annotated probabilities be $p_{\text{airplane}}$, $p_{\text{automobile}}$ and $p_{\text{bird}}$. Then, the normalized annotated probabilities for airplane is $\frac{p_{\text{airplane}}}{p_{\text{airplane}}+p_{\text{automobile}+p_{\text{bird}}}}$. In addition to computational convenience, we focused on three classes only to be able to depict our results in the unit simplex in $\mathbb{R}^3$. As a sanity check, we also computed the runtime for each testing point across different coverage levels using $5$ random seeds. As can be seen from the plot in Appendix \ref{plot-runtime}, the average runtime for a $3$-class problem is about $1.5$ ms and $3.5$ for a $5$-class problem, which is a reasonable increase.

We consider different miscoverage levels $\epsilon$, including $0.05$, $0.1$, $0.15$, $0.2$, $0.25$ and $0.3$. We denote the coverage level as $1- \epsilon$ in our plots. For simplicity, we put the significance levels $\alpha$ and $\delta$ to $\frac{\epsilon}{2}$.
We compare our method (denoted by ``Ours (IS)'' for ``Imprecise Set'') to Plausibility Reduced
Predictive Set $\Psi(c(X_{n+1}))$ proposed in \cite[Equation (41)]{stutz} (denoted by ``PRPS'').
For Algorithm \ref{algo-1}, we construct the credal region $c(X_{n+1})$ from \eqref{plaus-reg} by discretizing the simplex and computing the convex hull.
We report three measures, including empirical distribution coverage, empirical label coverage, and average inefficiency. In particular, let $\{\lambda_1, \ldots, \lambda_N\}$ be the annotated label distributions on the $N$ testing datapoints, $\{\mathcal{P}_1, \ldots, \mathcal{P}_N\}$ be the constructed credal regions, $\{\text{IHDS}_1, \ldots, \text{IHDS}_N\}$ be the constructed Imprecise Highest Density Sets. The empirical distribution coverage is defined as
\begin{equation}
    \frac{1}{N} \sum_{n=1}^N \mathbb{I}[\lambda_n \in \mathcal{P}_N],
\end{equation}
where $\mathbb{I}[\cdot]$ denotes the indicator function. The empirical label coverage is defined as
\begin{equation}
    \frac{1}{N} \sum_{n=1}^N \sum_{k=1}^K \mathbb{I}[k \in \text{IHDS}_N] \times \lambda_N^k, 
\end{equation} where $\lambda_N^k$ is the annotated probability for $k$-th class. The average inefficiency is defined as
\begin{equation}
    \frac{1}{N} \sum_{n=1}^N |\text{IHDS}_n|,
\end{equation} where $|\text{IHDS}_n|$ is the cardinality of the $n$-th IHDS. To account for randomness, we run each experiment with $20$ random seeds. 

\paragraph{Empirical distribution and label coverage.} First, we report the empirical distribution coverage levels on different datasets in Figure~\ref{fig:distribution_coverages}. Since our method and the conformal-based method utilize the same algorithm to compute distribution prediction sets, we only report the true distribution coverage levels using our method. As shown in the figure, the empirical distribution coverage levels are equal to $1-\alpha$, which is consistent with Proposition \ref{prop2}.

Secondly, we report the end-to-end empirical label coverage levels in Figure~\ref{fig:label_coverages}. As shown in the figure, the true label coverage levels of both methods are above $1-\epsilon$, which is consistent with the coverage guarantee in \cite[Equation (43)]{stutz} and Proposition \ref{prop6}. Both our method and the baseline served to obtain the coverage guarantee. Furthermore, the empirical coverage is higher than the expected level because, even if the true distribution is not fully contained within the specified credal region, the resulting high density set might still include the true label. This mechanism elevates the empirical coverage above $(1-\alpha)(1-\delta)$.

\begin{figure}[h]
    \centering
    \includegraphics[width=0.3\textwidth]{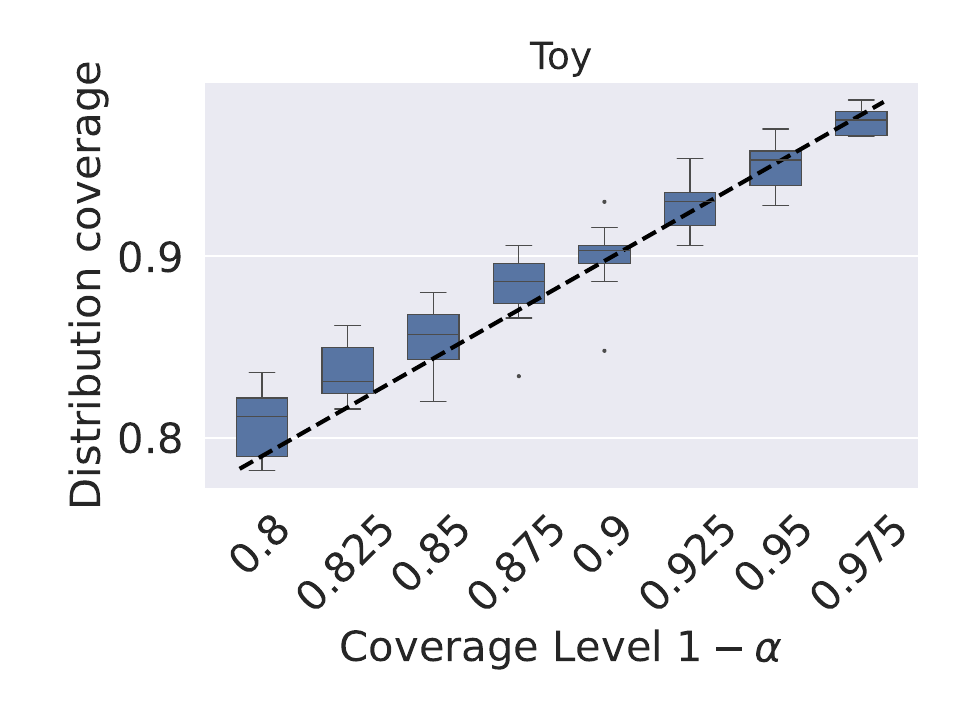}
    \includegraphics[width=0.3\textwidth]{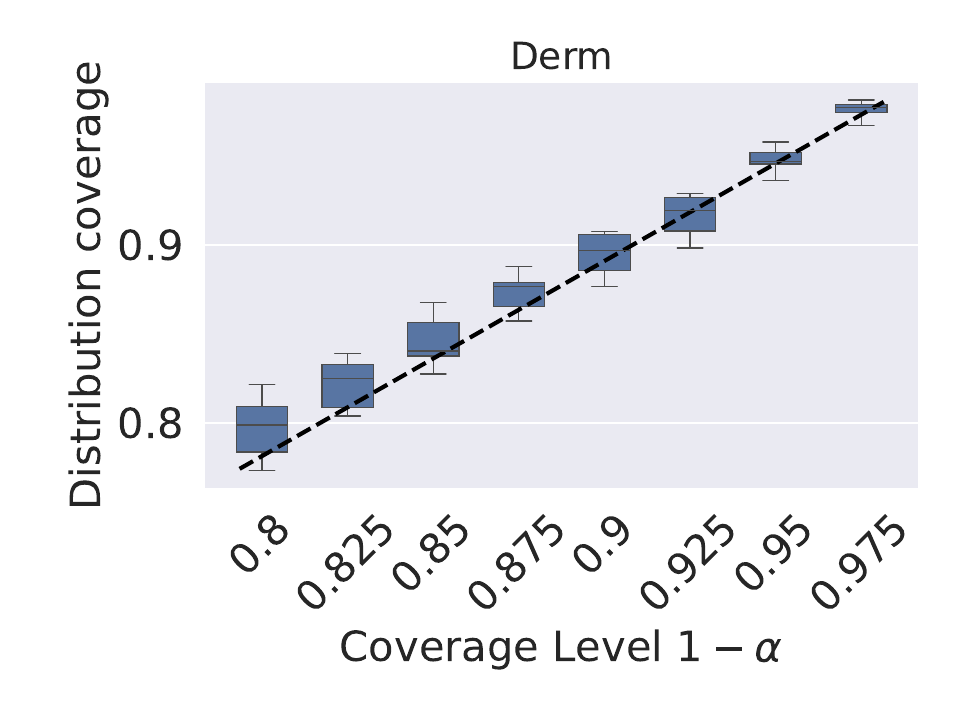}
    \includegraphics[width=0.3\textwidth]{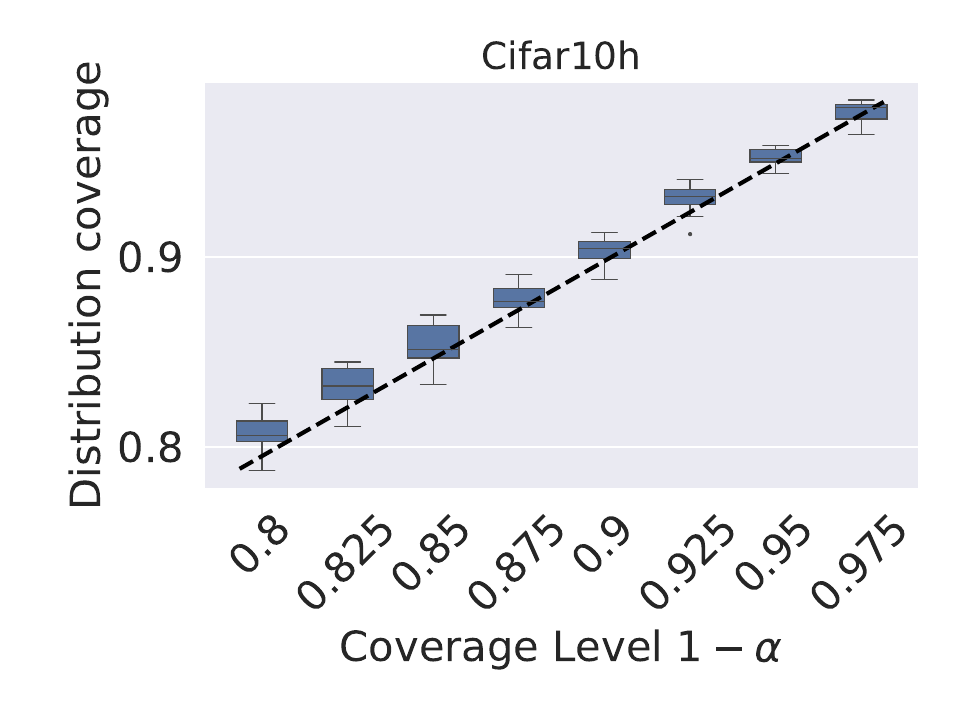}
    \caption{Empirical distribution coverage levels. The left plot is on toy dataset; the middle plot is on Dermatology DDx dataset; the right plot is on CIFAR-10H.}
    \label{fig:distribution_coverages}
\end{figure}

\paragraph{Inefficiency reduction.} We report the inefficiencies using our method and the baseline in Figure~\ref{fig:inefficiency}. In our experiments, the inefficiency is defined as the average size of the resulting label prediction sets, with larger sizes indicating higher inefficiency. As shown in the plots, in most cases, our method can obtain lower inefficiencies than the baseline method. These results are consistent with Proposition \ref{prop6}. This implies that while both methods provide the true coverage guarantee, our method can construct smaller prediction sets that provide more information to the users. The outlier cases, where the baseline achieves lower inefficiency (e.g., $1 - \epsilon = 0.8$ on the Toy dataset), arise from discretization in the search for plausible distributions in $\mathcal{P}$, potentially introducing additional conservativeness (indeed, calibration for $1 - \epsilon$ slightly larger or lower than $0.8$ removes this outlier in our experiments). Furthermore, the observed differences in inefficiency across datasets can be attributed to the variation and inherent ambiguity present within these datasets. For instance, the average inefficiency on CIFAR-10H is lower due to the concentration of probability mass on the correct label for most data points, resulting in low inherent ambiguity. In contrast, when the probability mass is more evenly distributed across labels, it induces higher inherent ambiguity.

Let us add a remark on the differences between the  datasets. They stem from the underlying ambiguity of the datasets. Compared to CIFAR-10H, the Dermatology DDX dataset is significantly more challenging by having more ambiguity in the ground truth labels. This produces sets that are more conservative overall, leading to higher coverage at lower variance.

\begin{figure}[h]
    \centering
    \includegraphics[width=0.3\textwidth]{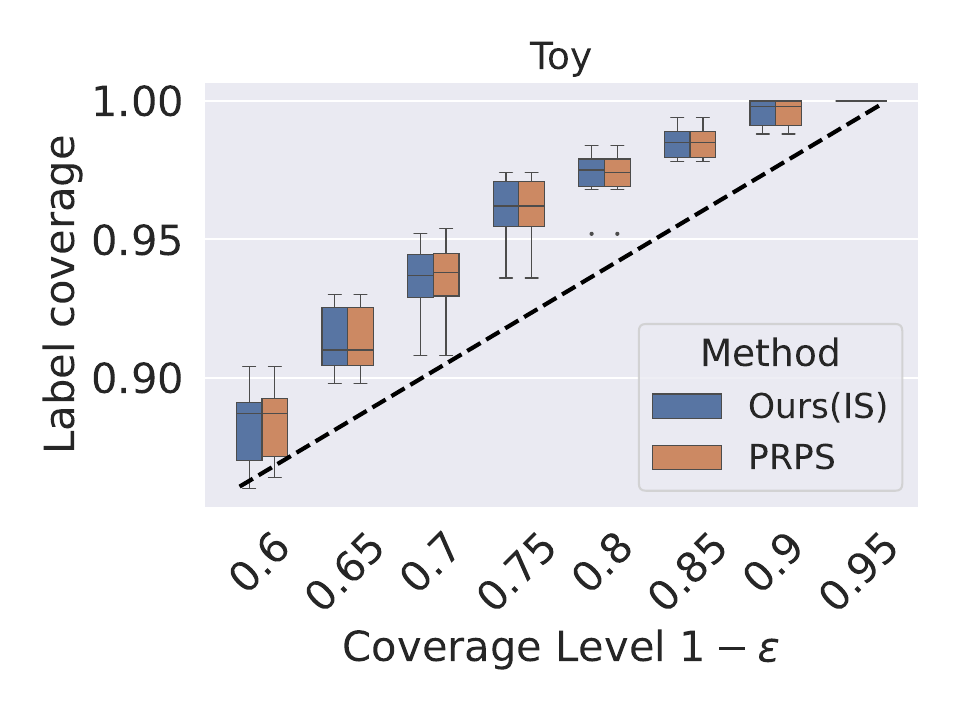}
    \includegraphics[width=0.3\textwidth]{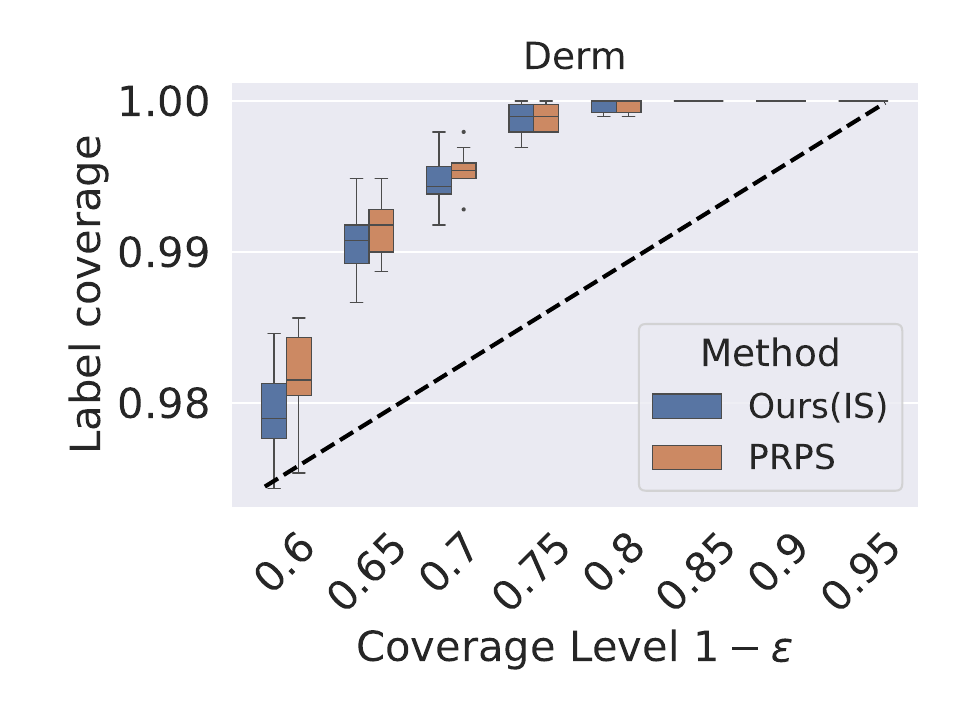}
    \includegraphics[width=0.3\textwidth]{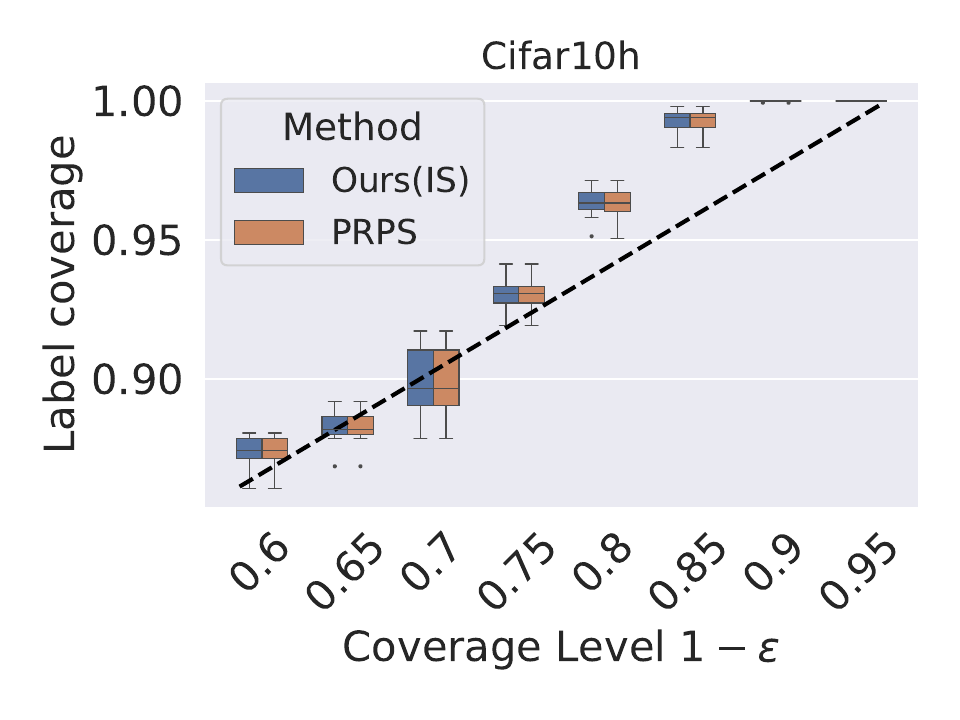}
    \caption{True label coverage levels. The left plot is on the toy dataset; the middle plot is on Dermatology DDx; the third is on CIFAR-10H.}
    \label{fig:label_coverages}
\end{figure}

\begin{figure}[h]
    \centering
    \includegraphics[width=0.3\textwidth]{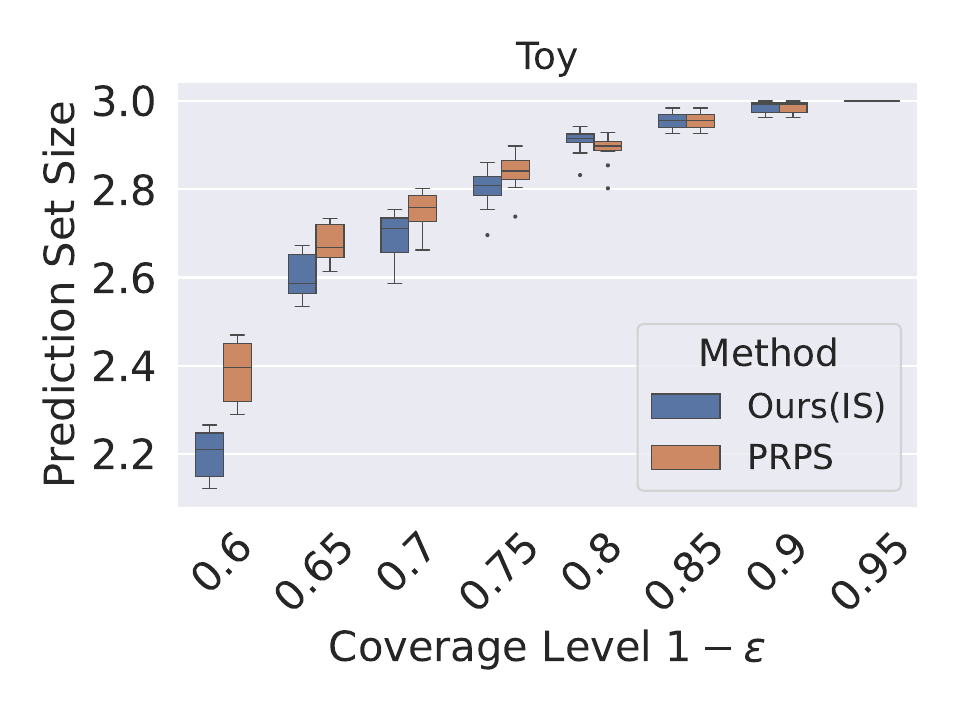}
    \includegraphics[width=0.3\textwidth]{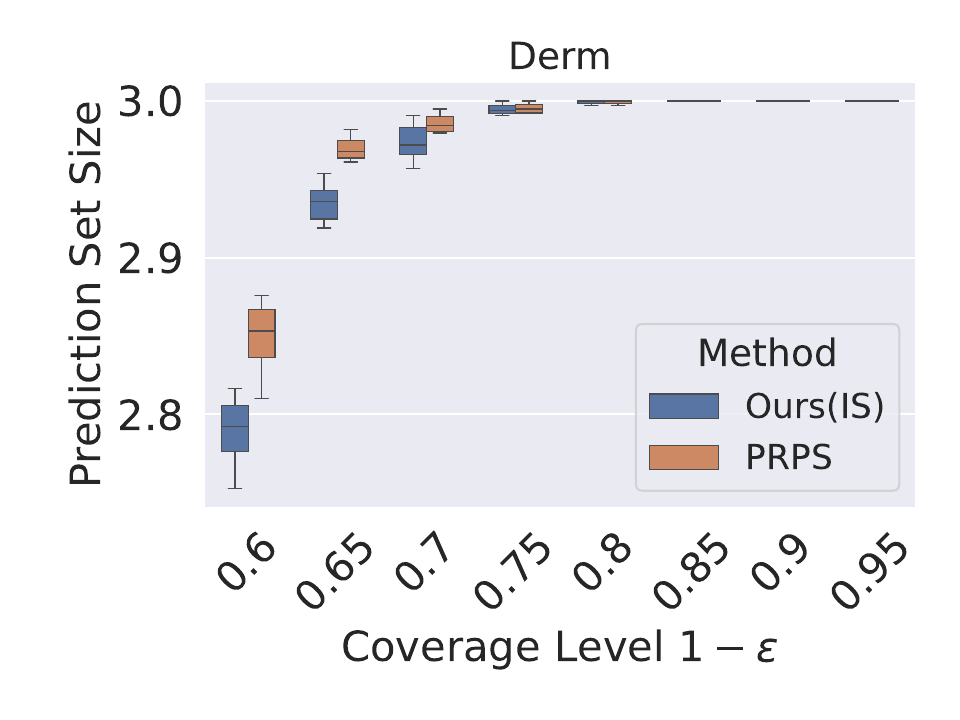}
    \includegraphics[width=0.3\textwidth]{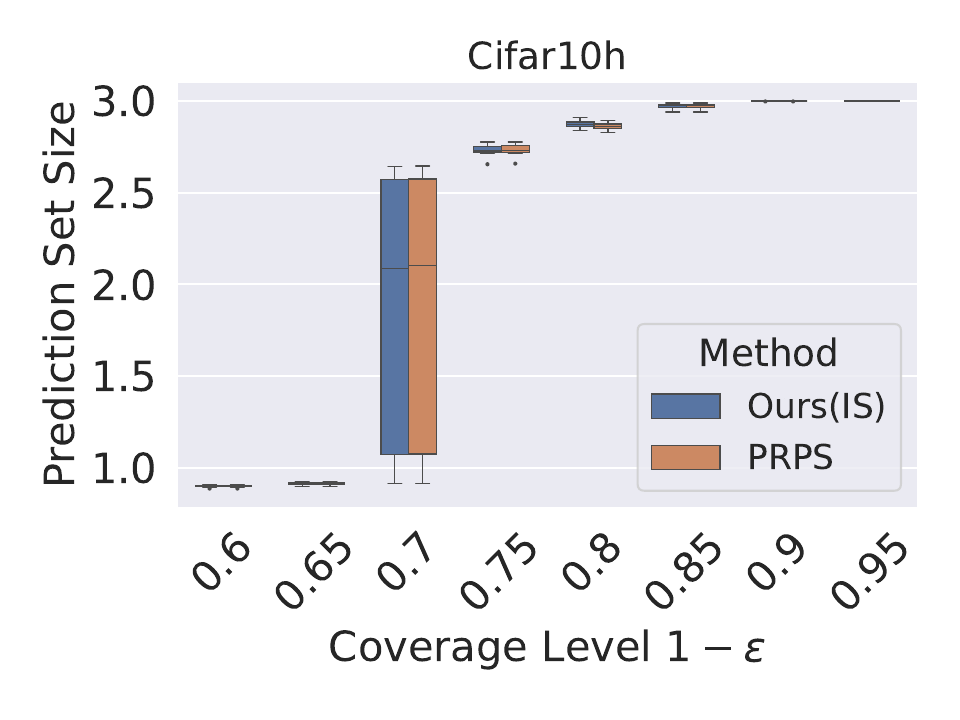}
    \caption{Average inefficiencies. The left plot is on the toy dataset; the middle plot is on Dermatology DDx; the third is on CIFAR-10H.}
    \label{fig:inefficiency}
\end{figure}

\paragraph{Effect of $\alpha$ and $\delta$.}  According to Proposition~\ref{prop6}, for a specified coverage level $1-\epsilon$, the probability $\mathbb{P}[Y_{n+1} \in \text{IS}_{\mathcal{P}, \delta}]$ holds for any combination of $\alpha$ and $\delta$ as long as they satisfy $(1 - \delta)(1 - \alpha) \geq 1-\epsilon$. However, different combinations of $\alpha$ and $\delta$ can impact the inefficiency of the resulting prediction set. To study this effect, we consider $\epsilon \in \{0.05, 0.1, 0.15, 0.2, 0.25, 0.3, 0.35, 0.4\}$, select $\alpha$ on a grid in $(0, \epsilon)$ with intervals of $\epsilon / 10$, and compute the corresponding $\delta$ levels. Figure~\ref{fig:inefficiency_grid} shows the average inefficiency across various $\alpha$ and $\delta$ combinations. The plot indicates that $\alpha$ has a more significant effect on inefficiency; allowing higher $\alpha$ typically results in lower inefficiency (i.e., smaller prediction sets). Furthermore, given a $(1 - \alpha)(1 - \delta)$ confidence level, variations in $\alpha$ have a more pronounced impact on the resulting inefficiency. Therefore, we recommend allocating more error tolerance to $\alpha$ while maintaining a reasonable $\delta$. Developing a technique to identify the optimal combination of $\alpha$ and $\delta$ levels remains a direction for future work.

\begin{figure}[h]
    \centering
    \includegraphics[width=0.5\linewidth]{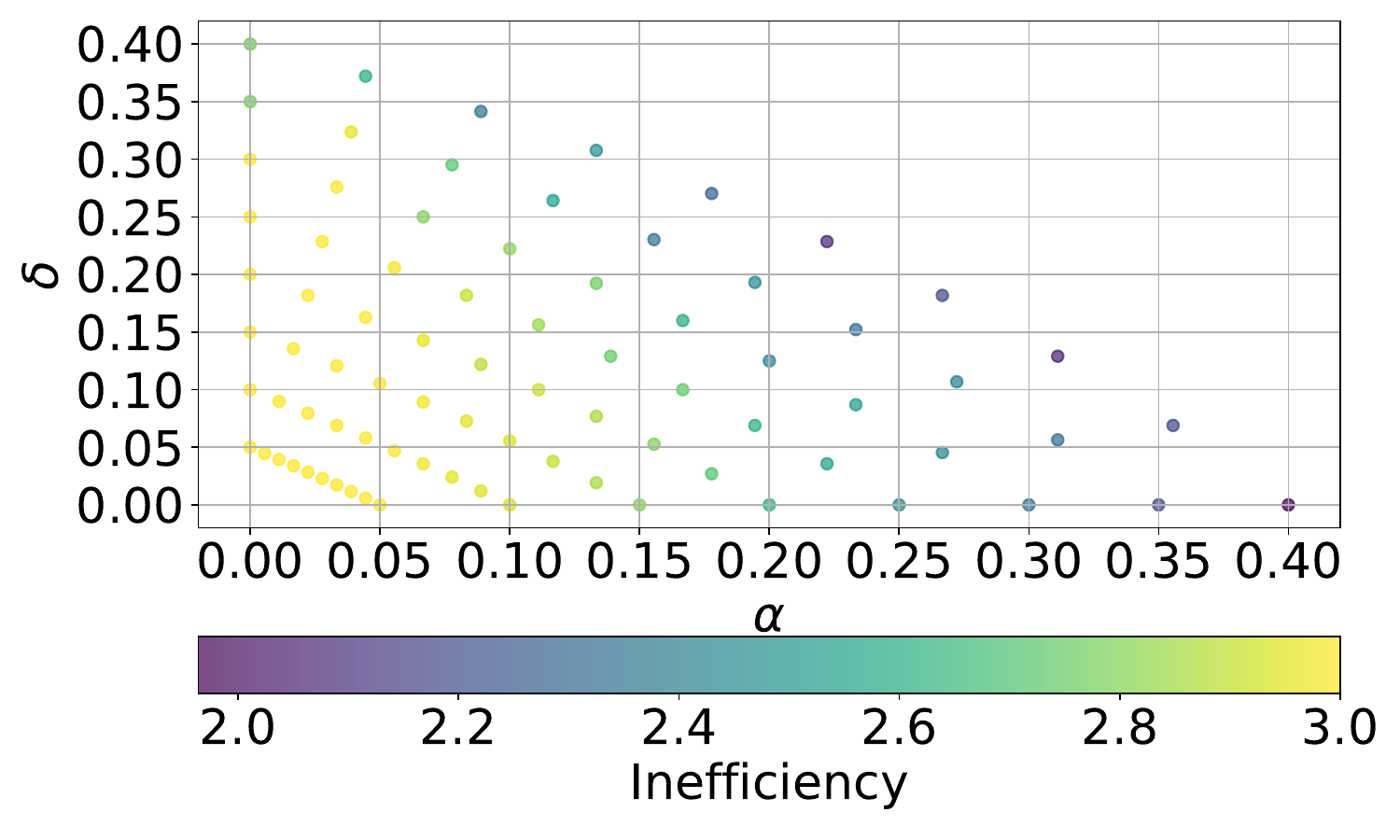}
    \caption{Inefficiency using different combinations of $\alpha$ and $\delta$.}
    \label{fig:inefficiency_grid}
\end{figure}

\paragraph{Qualitative analysis. } We conduct a qualitative analysis of the constructed credal regions using two examples from the DDx dataset, with results summarized in Table~\ref{fig:qual_example}. First, as shown in the middle and right columns, the constructed credal regions (right) encompass the ground truth label distributions (middle), which were annotated by domain experts. Second, as a higher $\epsilon$ level demands greater coverage, the resulting credal regions become larger.

In addition, we also compute lower and upper entropy for both examples. For the first one, we have that $\text{AU}(\mathcal{P})=\underline{H}(\mathcal{P})=0$ and $\text{TU}(\mathcal{P})=\overline{H}(\mathcal{P})=1.091$, so that $\text{EU}(\mathcal{P})=\overline{H}(\mathcal{P})-\underline{H}(\mathcal{P})=1.091$. For the second one, we have that $\text{AU}(\mathcal{P})=\underline{H}(\mathcal{P})=0$ and $\text{TU}(\mathcal{P})=\overline{H}(\mathcal{P})=0.991$, so that $\text{EU}(\mathcal{P})=\overline{H}(\mathcal{P})-\underline{H}(\mathcal{P})=0.991$. As we can see, in both cases the entire uncertainty faced by the user is of epistemic nature. The reason why this happens is that in both cases at least one one-hot encoding probability vector belongs to the credal region $\mathcal{P}$. In particular, in both cases the credal regions contain the one-hot encoding vector for the label ``Medium'', $\lambda^\star=(0,1,0)^\top$. Then, given the current state of epistemic knowledge, we cannot exclude that if we were to collect enough extra data, we would end up finding a singleton credal region $\mathcal{P}=\{P^\star=\text{Cat}(\lambda^\star)\}$. In that case, we would have zero aleatoric uncertainty, since we would be sure (with $P^\star$-probability $1$) that the label ``Medium'' is the correct one for the new input. In turn, since for the time being we cannot exclude this best-case scenario, all the uncertainty encoded in the credal regions is of reducible nature (EU). 

\begin{table}[h!]
    \centering
    \begin{tabular}{|c|c|c|c|} 
        \hline
        Image & Ground Truth & Credal region ($\epsilon=0.1$) & Credal region ($\epsilon=0.2$)\\
        \hline
        \adjustbox{valign=m}{\includegraphics[width=0.15\textwidth]{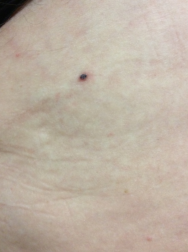}} & 
        \adjustbox{valign=m}{\includegraphics[width=0.25\textwidth]{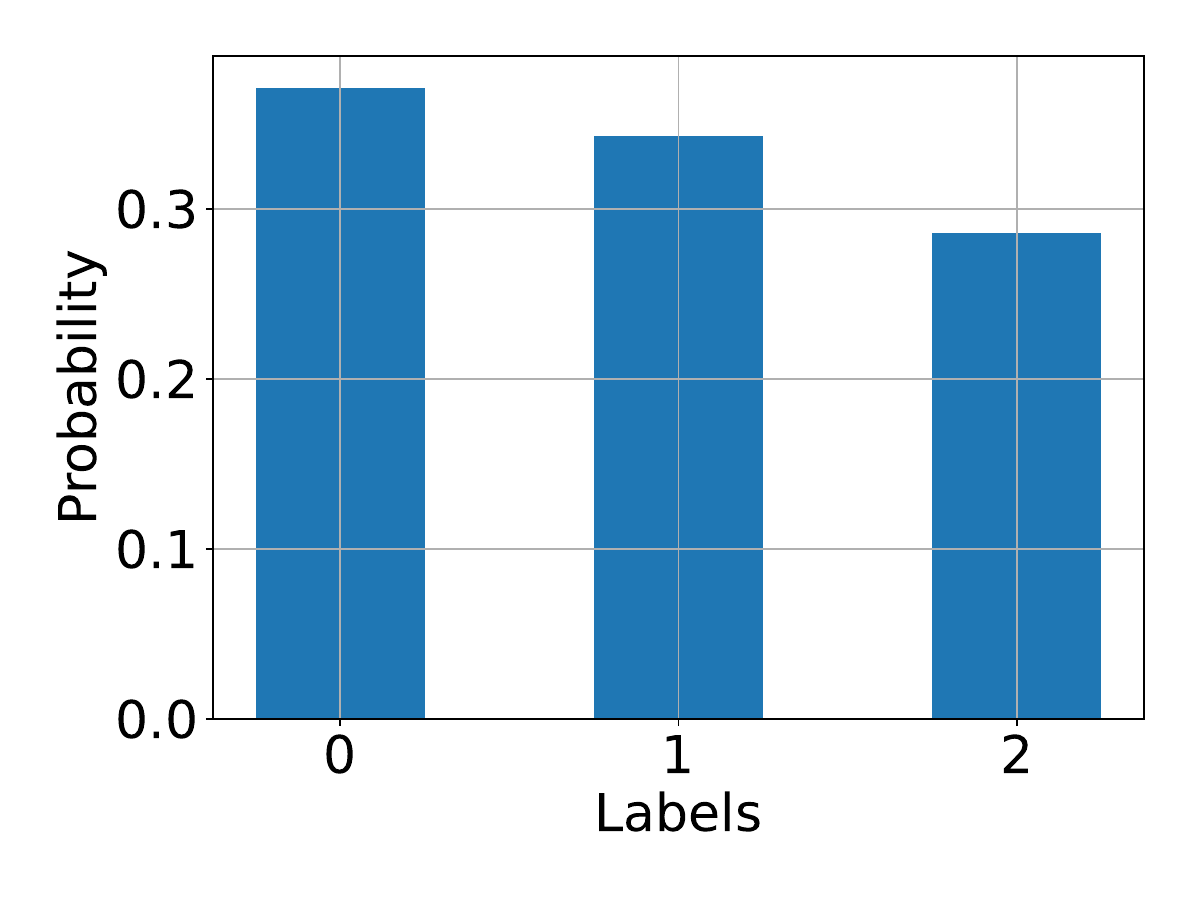}} & 
        \adjustbox{valign=m}{\includegraphics[width=0.25\textwidth]{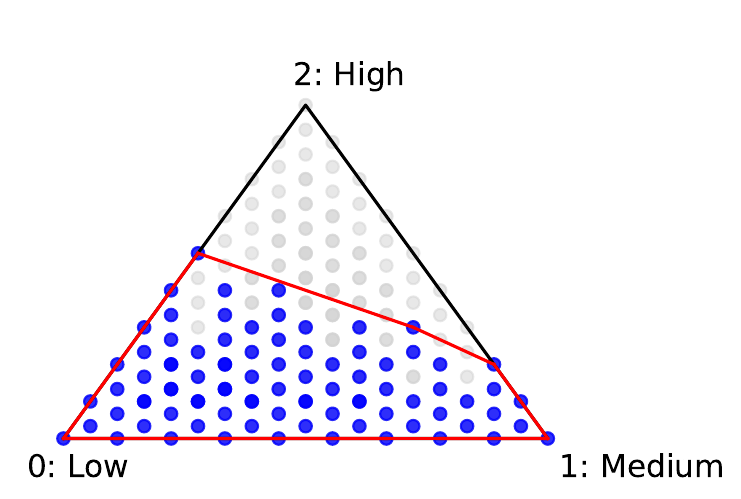}} &
        \adjustbox{valign=m}{\includegraphics[width=0.25\textwidth]{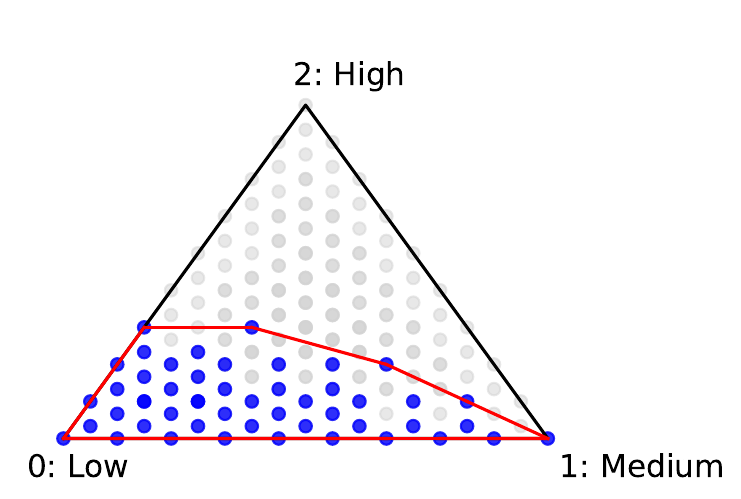}} \\
        \hline
        \adjustbox{valign=m}{\includegraphics[width=0.15\textwidth]{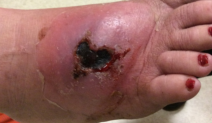}} & 
        \adjustbox{valign=m}{\includegraphics[width=0.25\textwidth]{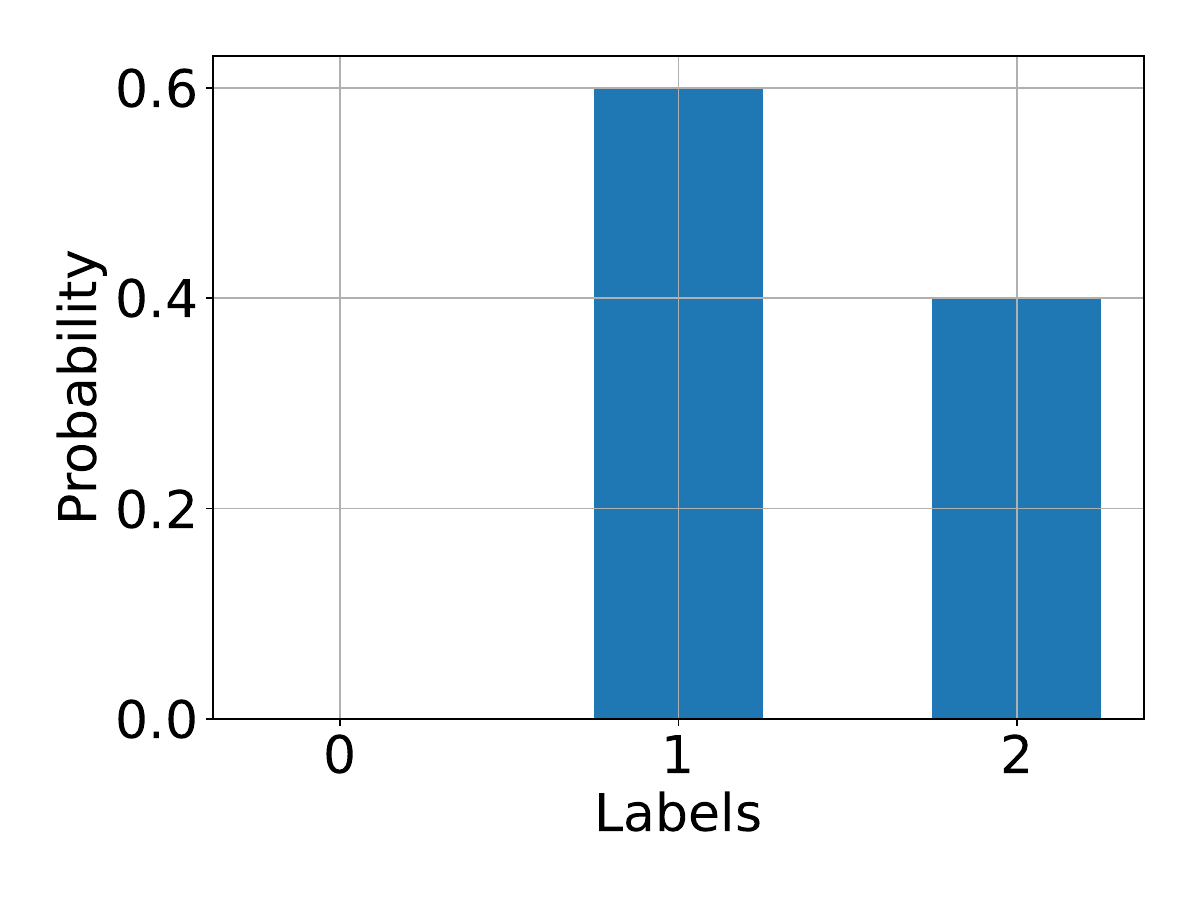}} & 
        \adjustbox{valign=m}{\includegraphics[width=0.25\textwidth]{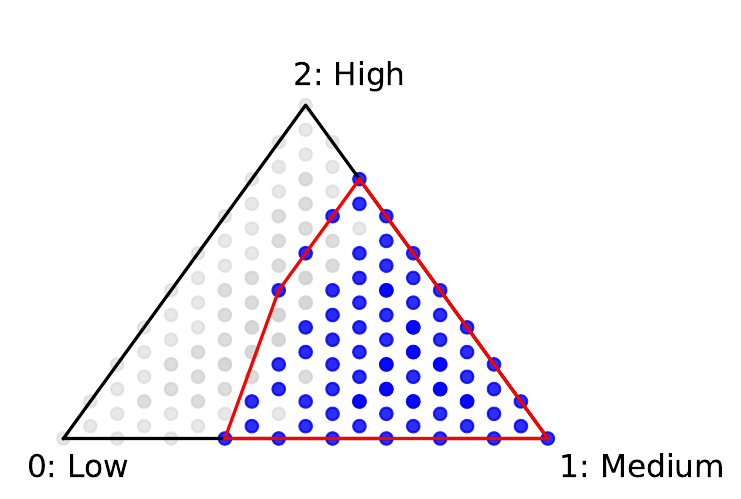}} &
        \adjustbox{valign=m}{\includegraphics[width=0.25\textwidth]{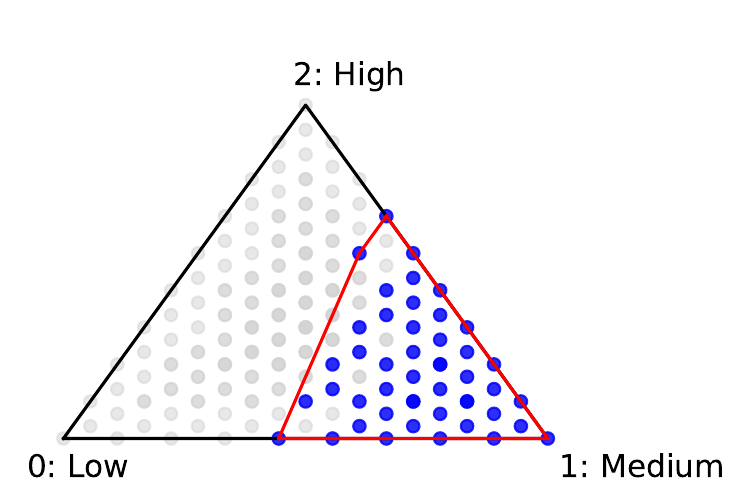}} \\
        \hline
    \end{tabular}
    \caption{Qualitative Examples from the DDx Dataset: The left column presents the original medical images, the middle column shows the label distributions annotated by experts, and the right column displays the credal regions constructed using our method.}
    \label{fig:qual_example}
\end{table}

\section{Conclusion}\label{concl}

In this paper, we address an important problem in imprecise probabilistic machine learning, namely how to empirically derive credal regions in a data-driven and efficient way, without any prior assumptions. To this end, we build on previous work by \cite{StutzTMLR2023,stutz} and apply conformal prediction in the probability space. Given a calibration set of examples with associated categorical distributions over classes, we can explicitly construct credal regions that enjoy a calibration property guaranteeing the true data generating process being included with high probability. Besides being efficient, this approach also allows to deal with ambiguous ground truth, examples where the ground truth label is uncertain. By constructing imprecise highest density sets \citep{coolen}, we can derive predictive sets of labels that are narrower, i.e., more efficient than those from \cite{stutz}. Compared to the seminal work of \cite{CELLA20221,cella}, we obtain credal regions without explicitly constructing the conformal transducer (i.e., computing $p$-values for all classes of the test example) and avoid the consonance assumption (stating that the $p$-value for the class $\tilde y$ that makes $(x_{n+1},\tilde y)$ the ``most conformal'' to what we see in the calibration set has to be $1$).
Moreover, due to our calibration property, we do not need to assume that the true data generating process is included in the credal region. We also show that our credal region is $[\delta/(1-\alpha)]$-type-2 valid, a version of a notion introduced in \citet[Definition 2]{cella}.

\textbf{Limitations.} Eliciting $\mathcal{P}$ requires exchangeability as a consequence of using a conformal approach; it also requires the availability of a calibration set. In addition, some subjectivity enters the analysis via the choice of the non-conformity measure for the conformal prediction methodology. Moreover, as proven e.g. in \cite{lei-wass}, with finite samples, we cannot give a conditional version of the probabilistic guarantee in Proposition \ref{prop3}. Furthermore, the discretization involved in constructing credal regions can lead to high computational complexity, particularly when the label space is large. Finally, throughout the paper we (tacitly) assumed that the plausibility vectors $\lambda_i$ in the calibration set are available and accurate; this may not be the case in some applications \citep{StutzARXIV2023}.



\bibliography{references,generated_references}
\bibliographystyle{tmlr}

\appendix

\section{Differences Between $\mathbb{P}_{\text{agg}}$ and $\mathbb{P}$}\label{p-agg-diff}

Following the argumentation of \citet[Section 3.1]{StutzTMLR2023}, the first step is to realize that in reality, we always deal with $\mathbb{P}_{\text{agg}}$ rather than $\mathbb{P}$. This is because in most cases, our labels are derived from annotations rather than from some objective process. Even if a ground truth is obtained from a measurement not involving human annotation, there might be noise and errors in these measurements. With this realization, the best we can do in terms of a coverage guarantee is to guarantee coverage with respect to $\mathbb{P}_{\text{agg}}$ (since we do not know and will likely never know the true underlying $\mathbb{P}$). In other words, performing well against $\mathbb{P}_{\text{agg}}$ means performing well against expert annotators. In light of $\mathbb{P}$ being unavailable, performing as good as the expert annotations is the best we can ever hope for.

Mathematically, we can model $\mathbb{P}_{\text{agg}}$ using an annotation process as described in detail in the last two paragraphs of \citet[Section 3.1]{StutzTMLR2023}.

\section{Uncertainty Quantification}\label{appuq}

Other measures for AU and EU are also available in the context of credal regions \citep{andrey,hofman2024quantifying,eyke}, and they can be used in place of upper and lower entropy to quantify EU and AU within our credal region $\mathcal{P}$, as long as the measure chosen for the total uncertainty is bounded. We chose upper and lower entropy because of their ease of computation. In fact, let us give a case in which the quantities in \eqref{decomp} can be easily calculated or bounded. Let $\text{ex}\mathcal{P}$ denote the extreme elements of $\mathcal{P}$, that is, those elements that cannot be written as a convex combination of one another. We have that $\mathcal{P}=\text{Conv}(\text{ex}\mathcal{P})$, and $\text{Conv}(\cdot)$ denotes the convex hull operator. Then, the following was proven in \cite[Theorem 8]{ednn}.

\begin{proposition}\label{thm-imp-reduced}
Suppose that $|\text{ex}\mathcal{P}|=S<\infty$. Let 
$$\underline{H}(P^\text{ex})\coloneqq\min_{P^\text{ex}_s \in \text{ex}\mathcal{P} }H(P^\text{ex}_s)$$ 
and 
$$\overline{H}(P^\text{ex})\coloneqq\max_{P^\text{ex}_s \in \text{ex}\mathcal{P} }H(P^\text{ex}_s).$$
Let
    \begin{align*}
        l[\text{TU}(\mathcal{P})]\coloneqq\max\bigg\{\sup_{\beta\in\Delta^{S-1}} \sum_{s=1}^S \beta_s H(P^\text{ex}_s), \overline{H}(P^\text{ex})\bigg\},
    \end{align*}
then 
    \begin{align*}
        \text{TU}(\mathcal{P}) &\in \bigg[ l[\text{TU}(\mathcal{P})], \sup_{\beta\in\Delta^{S-1}} \sum_{s=1}^S \beta_s H(P^\text{ex}_s) + \log_2(S) \bigg], \label{tot_unc_int}\\
        \text{AU}(\mathcal{P})&= \underline{H}(P^\text{ex}), \\
        \text{EU}(\mathcal{P}) &\in \bigg[ \max\bigg\{0, l[\text{TU}(\mathcal{P})] - \underline{H}(P^\text{ex})\bigg\}, \\
        &\sup_{\beta\in\Delta^{S-1}} \sum_{s=1}^S \beta_s H(P^\text{ex}_s) + \log_2(S) - \underline{H}(P^\text{ex})\bigg]. 
    \end{align*}
\end{proposition}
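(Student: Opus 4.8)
The plan is to exploit the concavity of the Shannon entropy $H$ on the simplex together with the representation $\mathcal{P}=\text{Conv}(\text{ex}\mathcal{P})$, so that every $P\in\mathcal{P}$ can be written as a finite mixture $P=\sum_{s=1}^S \beta_s P^\text{ex}_s$ with $\beta\in\Delta^{S-1}$. The three quantities then reduce to optimizing or bounding $H(\sum_s \beta_s P^\text{ex}_s)$ over $\beta$, and I would organize the argument as: an exact computation of $\text{AU}$, two-sided bounds on $\text{TU}$, and a purely arithmetic consequence for $\text{EU}$.

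First I would establish $\text{AU}(\mathcal{P})=\underline{H}(P^\text{ex})$. Since $H$ is concave and $\mathcal{P}$ is the convex hull of the finite vertex set $\text{ex}\mathcal{P}$, a concave function attains its minimum over a polytope at an extreme point; hence $\underline{H}(\mathcal{P})=\inf_{P\in\mathcal{P}}H(P)=\min_s H(P^\text{ex}_s)=\underline{H}(P^\text{ex})$, which is the claimed identity. Next, for the lower bound on $\text{TU}$, Jensen's inequality (concavity) gives $H(\sum_s \beta_s P^\text{ex}_s)\geq \sum_s \beta_s H(P^\text{ex}_s)$; taking the supremum over $\beta$ and using that each $P^\text{ex}_s\in\mathcal{P}$ yields $\text{TU}(\mathcal{P})\geq \max\{\sup_{\beta}\sum_s \beta_s H(P^\text{ex}_s),\overline{H}(P^\text{ex})\}=l[\text{TU}(\mathcal{P})]$.

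For the upper bound on $\text{TU}$, the key ingredient is the mixture-entropy inequality. Writing $Z$ for the latent component index with $\mathbb{P}(Z=s)=\beta_s$ and $Y\mid Z=s\sim P^\text{ex}_s$, the entropy chain rule $H(Y,Z)=H(Z)+H(Y\mid Z)=H(Y)+H(Z\mid Y)$ together with $H(Z\mid Y)\geq 0$ gives $H(Y)\leq H(\beta)+\sum_s \beta_s H(P^\text{ex}_s)$; since $H(\beta)\leq \log_2 S$, we obtain $H(\sum_s \beta_s P^\text{ex}_s)\leq \sum_s \beta_s H(P^\text{ex}_s)+\log_2 S$, and taking the supremum over $\beta$ produces the stated upper bound. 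Finally, $\text{EU}$ follows by arithmetic: substituting the two-sided bounds on $\text{TU}$ and the exact value of $\text{AU}$ into $\text{EU}(\mathcal{P})=\text{TU}(\mathcal{P})-\text{AU}(\mathcal{P})$, together with the trivial fact $\text{EU}\geq 0$, yields the claimed interval.

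The main obstacle is the upper bound on $\text{TU}$, i.e. the mixture-entropy inequality, which is the only genuinely information-theoretic step; everything else reduces to Jensen's inequality and the elementary observation that the linear functional $\beta\mapsto\sum_s \beta_s H(P^\text{ex}_s)$ is maximized over $\Delta^{S-1}$ at a vertex, so that $\sup_{\beta}\sum_s \beta_s H(P^\text{ex}_s)=\overline{H}(P^\text{ex})$ and $l[\text{TU}(\mathcal{P})]$ in fact collapses to $\overline{H}(P^\text{ex})$. I would keep the statement in its given (slightly redundant) form to mirror \cite[Theorem 8]{ednn}, but note this simplification in passing.
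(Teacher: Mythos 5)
Your proof is correct. Note first that the paper itself does not prove this proposition: it is imported verbatim from \cite[Theorem 8]{ednn}, so there is no in-paper argument to compare against, and a self-contained derivation like yours is exactly what a reader would want. Your three steps are sound and are the standard route to such bounds: the identity $\text{AU}(\mathcal{P})=\underline{H}(P^\text{ex})$ follows because, writing any $P\in\mathcal{P}$ as $\sum_s\beta_s P^\text{ex}_s$, Jensen's inequality gives $H(P)\geq\sum_s\beta_s H(P^\text{ex}_s)\geq\min_s H(P^\text{ex}_s)$, so the infimum over the polytope is attained at a vertex; the lower bound on $\text{TU}$ is the same Jensen step read in the other direction together with $P^\text{ex}_s\in\mathcal{P}$; the upper bound is the grouping (mixture-entropy) inequality $H(Y)\leq H(Z)+H(Y\mid Z)=H(\beta)+\sum_s\beta_s H(P^\text{ex}_s)$ with $H(\beta)\leq\log_2 S$; and the $\text{EU}$ interval is pure arithmetic from $\text{EU}=\text{TU}-\text{AU}$ plus $\text{EU}\geq 0$. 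Your side remark is also correct and worth recording: as the statement is reproduced here, $\beta\mapsto\sum_s\beta_s H(P^\text{ex}_s)$ is linear on $\Delta^{S-1}$, so $\sup_{\beta}\sum_s\beta_s H(P^\text{ex}_s)=\overline{H}(P^\text{ex})$, $l[\text{TU}(\mathcal{P})]$ collapses to $\overline{H}(P^\text{ex})$, and the $\max\{0,\cdot\}$ in the $\text{EU}$ lower bound is likewise vacuous since $\overline{H}(P^\text{ex})\geq\underline{H}(P^\text{ex})$; this redundancy is presumably an artifact of transplanting the statement from \cite{ednn}, where the mixing weights may be constrained, and does not affect the validity of your argument.
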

Calculating $\underline{H}(P^\text{ex})$ and $\overline{H}(P^\text{ex})$ is immediate from \eqref{entr-cat}. On the other hand, the supremum $\sup_{\beta\in\Delta^{S-1}} \sum_{s=1}^S \beta_s H(P^\text{ex}_s)$ can be computed in polynomial time.\footnote{We also point out how bounds for $\text{TU}(\mathcal{P})$, $\text{AU}(\mathcal{P})$, and $\text{EU}(\mathcal{P})$ in terms of the \textit{entropy of the lower probability} and of the \textit{entropy of the upper probability}, $H(\underline{P})$ and $H(\overline{P})$, respectively -- not to be confused with the lower and upper entropy $\underline{H}(\mathcal{P})$ and $\overline{H}(\mathcal{P})$, respectively -- were given in \cite[Theorem 13]{caprio_IBNN}.}

\section{Dermatology Dataset Details}\label{derm-details}

In Section \ref{experiments}, by studying the Dermatology DDX dataset, we follow \citep{LiuNATURE2020,StutzARXIV2023,StutzTMLR2023,stutz} and consider a very ambiguous as well as safety-critical application in dermatology: skin condition classification from multiple images. Such a dataset is from \citet{LiuNATURE2020}, and consists of $1949$ test examples and $419$ classes with up to $6$ color images resized to $448\times448$ pixels. The classes, i.e., conditions, were annotated by various dermatologists who provide partial rankings. These rankings are aggregated deterministically to obtain the plausibilities $\lambda$ using the inverse rank normalization procedure of  \citep{LiuNATURE2020} described in \cite[Section 3.1]{StutzTMLR2023}.

\section{Runtime of CIFAR-10H with More Classes}\label{plot-runtime}

As a sanity check, for CIFAR-10H we computed the runtime for each testing point across different coverage levels using $5$ random seeds. As can be seen from the plot below, the average runtime for a $3$-class problem is about $1.5$ ms and $3.5$ for a $5$-class problem, which we deem to be a reasonable increase.

\begin{figure}[h]
    \centering
    \includegraphics[width=.5\textwidth]{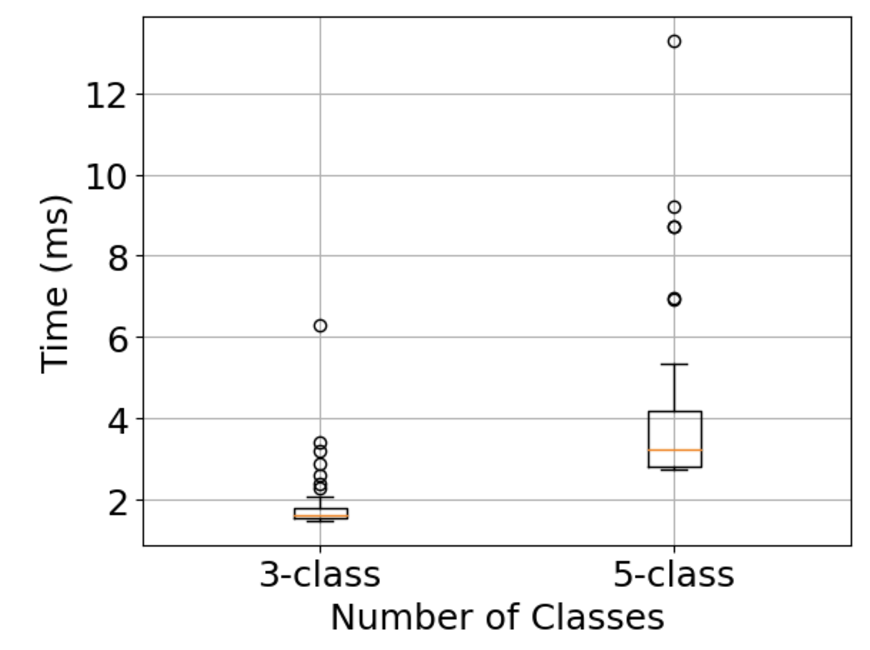}
\end{figure}

\end{document}